\patchcmd{\minted@colorbg}{\medskip}{}{}{}
\patchcmd{\endminted@colorbg}{\medskip}{}{}{}
\newcommand{\A}{\mathbf{A}}
\newcommand{\B}{\mathbf{B}}
\newcommand{\C}{\mathbf{C}}
\newcommand{\rr}{\mathbf{r}}
\newcommand{\M}{\mathbf{M}}
\newcommand{\OO}{\mathcal{O}}
\newcommand{\I}{\mathbf{I}}
\newcommand{\bb}{\mathbf{b}}
\newcommand{\zz}{\mathbf{z}}
\newcommand{\cc}{\mathbf{c}}
\newcommand{\sss}{\mathbf{s}}
\newcommand{\ttt}{\mathbf{t}}
\newcommand{\TT}{\mathbf{\Theta}}
\newcommand{\mm}{\bm{\mu}}
\newcommand{\SSS}{\mathbf{\Sigma}}
\newcommand{\SSU}{\mathbf{\Sigma_\uu}}
\newcommand{\SSUI}{\mathbf{\Sigma_\uu^{-1}}}
\newcommand{\SSV}{\mathbf{\Sigma_\vv}}
\newcommand{\SSVI}{\mathbf{\Sigma_\vv^{-1}}}
\newcommand{\ssV}{\mathbf{\tau_\vv}}
\newcommand{\SSY}{\mathbf{\Sigma_\yy}}
\newcommand{\SSYI}{\mathbf{\Sigma_\yy^{-1}}}
\newcommand{\ssY}{\mathbf{\tau_\yy}}
\newcommand{\uu}{\mathbf{u}}
\newcommand{\Q}{\mathbf{Q}}
\newcommand{\LL}{\mathbf{\Lambda}}
\newcommand{\yy}{\mathbf{y}}
\newcommand{\xx}{\mathbf{x}}
\newcommand{\vv}{\mathbf{v}}
\newcommand{\zero}{\mathbf{0}}
\newcommand{\LLL}{\mathbf{L}}
\newcommand{\TTT}{\mathbf{T}}
\newcommand{\EE}{\mathbf{E}}
\newcommand{\FF}{\mathbf{F}}
\newcommand{\GG}{\mathbf{G}}
\newcommand{\HC}{\text{HalfCauchy}}
\newcommand{\LKJ}{\text{LKJCholesky}}
\tikzset{elegant/.style={smooth, black, thick, samples=101}}
\tikzstyle{round}=[circle ,text centered,draw=black]
\tikzstyle{arrow} = [-,>=stealth, thick]
\tikzstyle{rct}=[rectangle,draw,thin,fill=white]
\newcommand{\ds}[1]{
}
\newcommand{\blue}[1]{\textcolor{black}{#1}}
\newcommand{\camread}[1]{\textcolor{black}{#1}}
\newtheorem{theorem}{Theorem}
\newtheorem{lemma}{Lemma}
\title{Hamiltonian Monte Carlo Inference of Marginalized Linear Mixed-Effects Models}
\author{%
  Jinlin Lai,\quad Justin Domke,\quad Daniel Sheldon \\
  Manning College of Information and Computer Sciences\\ University of Massachusetts Amherst \\
  \texttt{\{jinlinlai,domke,sheldon\}}@cs.umass.edu \\
}
\begin{document}

\maketitle

\begin{abstract}
Bayesian reasoning in linear mixed-effects models (LMMs) is challenging and often requires advanced sampling techniques like Markov chain Monte Carlo (MCMC).
A common approach is to write the model in a probabilistic programming language and then sample via Hamiltonian Monte Carlo (HMC).
However, there are many ways a user can transform a model that make \camread{inference} \ds{"HMC" $\to$ "inference"?} more or less efficient.
In particular, marginalizing some variables can greatly improve \camread{inference} but is difficult for users to do manually.
We develop an algorithm to easily marginalize random effects in LMMs.
A naive approach introduces cubic time operations within \camread{an inference algorithm like} HMC, but we reduce the running time to linear using fast linear algebra techniques.
We show that marginalization is always beneficial when applicable and highlight improvements in various models, especially ones from cognitive sciences\footnote{The code is available at \url{https://github.com/lll6924/hamiltonian_lme.git}.}.
\end{abstract}

\section{Introduction}
Bayesian hierarchical models account for complicated relationships in data by introducing hierarchical structures \cite{gelman2006data}. 
Among hierarchical models, linear mixed effects models (LMMs) are widely used in various scientific disciplines, including ecology \cite{harrison2018brief}, medicine \cite{brown2014applied}, psychology \cite{meteyard2020best}, neuroscience \cite{yu2022beyond} and cognitive science \cite{nicenboim2021introduction}. \blue{Solving LMMs involves inferring latent variables, such as fixed and random effects, based on the observed data. Fixed effects are shared by all observations, while random effects vary across different groups within the data.} 
LMMs are often implemented using probabilistic programming languages (PPLs), which isolate inference from modeling: users write a program representing the model and the PPL automatically executes a suitable inference algorithm. 
Variants of Hamiltonian Monte Carlo (HMC) \cite{duane1987hybrid} are dominant in many PPLs today and are widely used for LMMs. 
For example, BRMS~\cite{burkner2017brms} is an influential R package that allows users to write regression-style formulas that are automatically translated to Stan programs~\cite{carpenter2017stan} representing an LMM, and then Stan's HMC implementation is called to generate posterior samples.

We develop techniques that allow users to easily transform their models to analytically marginalize random effect variables from LMMs to improve the efficiency of HMC.
Marginalization has several benefits. 
First, there are often pathologies in LMMs that hinder efficient HMC sampling. 
A notable one is the ``funnel'' shape created by correlation between variance parameters and parameters for fixed or random effects~\cite{neal2003slice}. 
Marginalization~\cite{lai2023automatically} and other program transformations~\cite{gorinova2020automatic} have been shown to be useful in addressing such pathologies.
Second, marginalization reduces the number $H$ of latent variables for HMC.
The complexity of HMC is about $\OO(H^{5/4})$~\cite{creutz1988global, neal2011mcmc}, so it is desirable to run HMC on a subset of variables if marginalization can be done efficiently.
Our methods enable marginalization of random effects in LMMs with a linear Gaussian structure, which includes models with normal and log-normal likelihoods as well as other likelihoods for continuous data based on transforming a normal distribution. \camread{Note that our methods are not limited to HMC, and could be applied to many inference algorithms.} \ds{Why "target a log density"? Couldn't they also apply to, e.g., Gibbs sampling? Maybe just say "many inference algorithms".}

There are several challenges to efficient marginalization.
The automatic marginalization algorithm of~\cite{lai2023automatically} can be applied to LMMs but is limited to scalar random variables, so it requires users to construct the LMM as a graphical model with separate variables for each effect and observation.
Another alternative is to model the relationships between effects and observations with a design matrix and marginalize effects using properties of multivariate normal distributions.
We call this the ``vectorized approach'' since it can leverage vectorization to accelerate computations.
Unfortunately, vectorized marginalization leads to a dense covariance matrix over the observations and thus cubic time for evaluating the log-density within HMC, when the log-density of the original could be evaluated in linear time.
Our main technical contribution is to accelerate vectorized marginalization for LMMs using fast linear algebra: we show that marginalization for a single random effect can be achieved with linear time complexity and can significantly accelerate HMC compared to both the original model and non-vectorized marginalization.


We implement vectorized marginalization for LMMs in NumPyro \cite{bingham2019pyro,phan2019composable} via simple classes users can use to express their models.
We evaluate our approach on a variety of real LMMs from past scientific investigations, including nine models and datasets from cognitive sciences, and find that marginalization is always beneficial. 
Our findings suggest that practitioners should marginalize group-level effects whenever applicable in Bayesian hierarchical inference.

\section{Background}
\label{sec:background}

To motivate our problem, we present an example model. 
In~\cite{wahn2016pupil}, a set of experiments were run to examine the relationship between human pupil and attention load. 
A total of $N=2228$ measurements of pupil sizes from $M=20$ subjects were taken under different attention load levels. Specifically, in the $i$th measurement, the pupil size $y_i\in \mathbb{R}^{+}$ of subject $g_i\in\{1,2,...,k\}$ under attention load $c_i\in \{0,1,2,3,4,5\}$ was recorded. 
Pupil size can be assumed to have linear relationship $y_i \approx \theta_0 + \theta_1 c_i$ with respect to the attention load $c_i$, where both the slope $\theta_1$ and intercept $\theta_0$ split into fixed and random effects:
\begin{align}
	y_i=\alpha + u_{g_i,1} + c_i(\beta + u_{g_i,2}) + \epsilon,\ \epsilon\sim\mathcal{N}(0, \sigma^2),\notag
\end{align}
where $\alpha,\beta$ are variables for fixed effects and $u_{\cdot,\cdot}$ are variables for subject-specific random effects. 
Bayesian hierarchical modeling assigns priors to each unknown variable:
\begin{gather}
	\alpha\sim\mathcal{N}(1000, 500^2),\ \beta\sim\mathcal{N}(0, 100),\ \sigma\sim\mathcal{N}^{+}(0, 1000),\ \TTT\sim\mathcal{N}^{+}(\zero,\text{diag}(1000^2,1000^2)), \notag\\
	\LLL_u\sim \LKJ(2,1),\ \camread{[u_{j,1},u_{j,2}]} \sim\mathcal{N}(\zero, \TTT\LLL_u\LLL_u^T\TTT),\ j=1,2,...,k.\notag
\end{gather}
A half-normal distribution ($\mathcal N^+$) and an LKJ distribution (LKJCholesky) \cite{lewandowski2009generating} are used as a prior on the covariance matrix. Inference for the unknown parameters determining the relationship between pupil size and attention load can be performed by writing a probabilistic program and running HMC.
For example, in NumPyro, the regression model for all measurements may be implemented as below.
\begin{minted}
	[
	autogobble,bgcolor=white,
	frame=single,
	fontsize=\footnotesize,
	]
	{python}
	numpyro.sample('y',dist.Normal(alpha+u[g][:,0]+c*(beta+u[g][:,1]),sigma),obs=y)
\end{minted}
The code above uses advanced indexing and vectorization techniques in \texttt{numpy}, where \texttt{u,g,c,y} are all vectors or matrices. We further observe that, conditioned on $\alpha,\beta,\sigma,\TTT,\LLL_u$, the distribution of all $\uu_j$ and all $y_i$ form a multivariate normal distribution. Theoretically it is possible to analytically integrate $\uu$ 
out from the model to improve inference efficiency. 
But it is not straightforward for users to transform the probabilistic program to do so, and, as we will see, if done in the most obvious way, may not make the model more efficient for HMC.

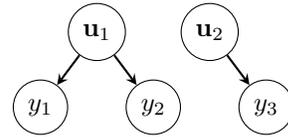
\begin{wrapfigure}{r}{0.28\textwidth}
	\vspace{-25pt}
	\begin{center}
		\begin{tikzpicture}
			\node[round,xshift=-0.75cm, yshift=-1cm](x1){$\uu_1$};
			\node[round,xshift=0.75cm, yshift=-1cm](x2){$\uu_2$};
			\node[round,xshift=-1.5cm, yshift=-2cm](y1){$y_1$};
			\node[round,xshift=0cm, yshift=-2cm](y2){$y_2$};
			\node[round,xshift=1.5cm, yshift=-2cm](y3){$y_3$};
			
			\draw [arrow, ->] (x1) -- (y1);
			\draw [arrow, ->] (x1) -- (y2);
			\draw [arrow, ->] (x2) -- (y3);
		\end{tikzpicture}
	\end{center}
	\caption{A tree-structured model conditioned on $\TT$.}
	\label{fig:tree}
	\vspace{-15pt}
\end{wrapfigure}
To be more clear about how marginalization can be implemented, we rearrange the model into a canonical form that focuses on the random effects. All observations are collected into the vector $\yy=[y_1,...,y_N]^T$ and random effects into the vector $\uu=[u_{1,1},u_{1,2},...,u_{k,1},u_{k,2}]^T$.
Then, we can write
\begin{align}
	\uu\sim\mathcal{N}(\mm,\SSU),\ \yy\sim\mathcal{N}(\A\uu+\bb,\SSY),\notag
\end{align}
where $\mm,\SSU,\A,\bb,\SSY$ are functions of $\alpha,\beta,\sigma,\TTT,\LLL_u,g_i,c_i$. 
Note that $y_i$ only depends on the entry $\uu_{g_i}$ of $\uu$.
The corresponding graphical model has a tree structure, as demonstrated in Figure \ref{fig:tree}.
This tree structure has several benefits: first, matrix multiplications like $\A\uu$ and $\A^T\yy$ can be done efficiently; second, we will see that it leads to a block-diagonal structure that facilitates efficient inversion in a key matrix that appears later. 

\ds{I made some light edits below. I tried to follow this convention: "class of random effects" (the set of parameters) but "classification of observations" (the partitioning of observations into groups)}
For more general LMMs with more than one class of random effects we generalize the canonical form as
\begin{gather}
\label{eq:model}
\TT\sim p(\TT),\quad \uu_i|\TT \sim  \mathcal{N}(\mm_i(\TT),\SSU_i(\TT)),\quad i=1,2,...,L\notag\\
\quad \yy|\TT, \uu_1,\uu_2,...\uu_L\sim\mathcal{N}\left(\sum_{i=1}^L\A_i(\TT)\uu_i+\bb(\TT),\SSY(\TT)\right),
\end{gather}
where $p(\TT)$ is the distribution for global variables (including fixed effects), $p(\uu_i|\TT)$ is the distribution for random effects and $p(\yy|\TT,\uu_1,...,\uu_L)$ is the distribution for observations. \camread{Notationally this generalization further adds an index to each random effect to specify its class.}
A user might specify the model directly in this canonical form, or in another syntax (e.g., the formula syntax of BRMS) that is compiled to this form. 
Each pair $(\uu_i, \A_i)$ specifies a \camread{class} of random effects for a particular classification of the observations (e.g., by subject, age, gender, etc.). \camread{Each classification contains multiple groups and different classifications are distinct from one another. 
Each observation belongs to one group for each classification.}
The vector $\uu_i=[\uu_{i,1}^T,\uu_{i,2}^T,...,\uu_{i,k_i}^T]^T$ contains random effects for the $i$th classification (e.g., subject, age, or gender), consisting of $k_i$ groups (e.g., one subject, age, or gender), with $\uu_{i,j}$ containing the random effects (e.g., slope and intercept) for the $j$th group.
We denote the number of observations as $\dim(\yy)=N$, and the number of random effects per group as $\dim(\uu_{i,j})=d$.
Any covariates---such as $c_i$ in the pupil size example---are considered constants and not represented in the notation.
In LMMs, the number $d$ is related to the number of covariates and is usually small.
The total number of random effects for $\uu_i$ is denoted as $\dim(\uu_i) = M_i =k_i d$.
The matrix $\A_i$ therefore has size $N\times M_i$, and encodes the group structure for $\uu_i$ by mapping random effects (together with covariates) to observations. \blue{Each row of $\A_i$ encodes the assignment of an observation to one group, so it has at most $d$ nonzero elements. Therefore, the complexity of computing $\A_i\uu_i$ is $\OO(Nd)$, as $\A$ has at most $Nd$ nonzero elements.}
Henceforth, we omit the dependence on $\TT$ for $\mm$, $\SSU$, $\A$, $\bb$, $\SSY$ for simplicity. 

\paragraph{Marginalizing $\uu_i$}
It is possible to analytically marginalize variables in this model: since the mean of $\yy$ is linear in each $\uu_i$ and all of these variables are normally distributed, the joint distribution of $(\yy, \uu_1, \ldots, \uu_L)$ is also multivariate normal.
We will focus for most of the paper on marginalizing the random effects $\uu_i$ for a single $i$ in order to leverage the tree structure mentioned earlier, but return in Section~\ref{sec:opt2} to the idea of marginalizing many effects.
Locally, $\uu_i$ and $\yy$ form the conditional distribution $p(\uu_{i},\yy|\TT,\uu_{-i})=p(\uu_i|\TT)p(\yy|\TT,\uu_{-i},\uu_i)$. 
Marginalized MCMC rewrites this conditional distribution as $p(\uu_{i},\yy|\TT,\uu_{-i})=p(\yy|\TT,\uu_{-i})p(\uu_{i}|\TT,\yy,\uu_{-i})$, which reverses the dependence between $\uu_i$ and $\yy$~\cite{lai2023automatically}.
During sampling, $\uu_i$ is marginalized from the HMC procedure by using $p(\yy|\TT,\uu_{-i})$ as the likelihood function and $p(\TT,\uu_{-i})$ as the distribution of latent variables. 
After HMC sampling, $\uu_i$ is recovered through ancestral sampling from $p(\uu_{i}|\TT,\yy,\uu_{-i})$ given posterior samples of $(\TT,\uu_{-i})$. 
The reversal requires analytical forms of $p(\yy|\TT,\uu_{-i})$ and $p(\uu_{i}|\TT,\yy,\uu_{-i})$, which can be obtained via standard marginalization and conditioning operations on multivariate normal distributions~\cite[e.g.,][]{bishopPRML}
\begin{align}
\label{eq:marginalized}
\yy|\TT,\uu_{-i}&\sim\mathcal{N}\left(\sum_{j\neq i}\A_j\uu_j+\A_i\mm_i+\bb,\A_i\SSU_i\A_i^T+\SSY\right),\notag\\
\uu_i|\TT,\yy,\uu_{-i}&\sim\mathcal{N}\left(\mm_i+\M\left(\yy-\sum_{j\neq i}\A_j\uu_j-\A_i\mm_i-\bb\right),(\I-\M\A_i)\SSU_i\right),
\end{align}
where $\M=\SSU_i\A_i^T(\A_i\SSU_i\A_i^T+\SSY)^{-1}$. 
Marginalization introduces the benefit of sampling in a lower dimensional space, but the cost depends on the complexity of evaluating the log-density functions of these two distributions in order to run HMC. 

\subsection{Challenges of multivariate marginalization}
In practice, the original model usually has structure that makes evaluating its density very efficient, which is lost by naive marginalization.
For example, the observations in $\yy$ are usually conditionally independent, making $\SSY$ diagonal; also, $\SSU_i$ is usually block diagonal with blocks of size $d\times d$. 
So evaluating the density $p(\uu_i,\yy|\TT,\uu_{-i})=p(\uu_i|\TT)p(\yy|\TT,\uu_{1:L})$ requires $\OO(k_id^3+NLd)=\OO(M_id^2+NLd)$ time with the main operations being (1) inverting and computing the determinant of $\SSU$ and $\SSY$; (2) computing the mean parameter of $\yy$.
When $\SSU_i$ is diagonal, the complexity goes down to $\OO(M_id+NLd)$. However, it is more expensive to evaluate the density of the reversed model in Equation (\ref{eq:marginalized}). 
Computing $p(\yy|\TT,\uu_{-i})$ and $p(\uu_i|\TT,\yy,\uu_{-i})$ requires the inverting and computing the determinant of the $N\times N$ matrix $\A_i\SSU_i\A_i^T+\SSY$, which we denote by $\EE$ for simplicity. For the log likelihood, we need to compute $\log p(\yy|\TT,\uu_{-i})=-\frac{1}{2}\det\left(\EE\right)-\frac{1}{2}\zz^T\EE^{-1}\zz + C$,
where $\zz=\yy-\sum_{j\neq i}\A_j\uu_j-\A_i\mm_i-\bb$.
$\EE$ is not diagonal and without using additional structure will trigger $\OO(N^3)$ operations within each step of the leapfrog integrator within HMC. 
For the recovery distribution $p(\uu_i|\TT,\yy,\uu_{-i})$, $\EE$ will be inverted when calculating $\M$. Also, a Cholesky decomposition for the covariance $(\I-\M\A_i)\SSU_i$ should be computed for sampling, which takes $\OO(M_i^3)$ time. 
These cubic time operations are prohibitively expensive for large datasets. \blue{We summarize the complexities of different approaches in Table \ref{tab:complexity}. In Section \ref{sec:marginalization}, we discuss how to marginalize one group of random effects with lemmas from linear algebra. In Section \ref{sec:opt2}, we discuss how to marginalize all random effects with additional assumptions.}

\section{Marginalization with fast linear algebra}
\label{sec:marginalization}
\begin{table}[t]
\centering
\caption{Time complexities of different HMC approaches for the submodel involved in marginalization. Initialization is done once before the HMC loop. The log density is computed within each step of the leapfrog integrator. Recovery is performed for each sample from HMC. $N$ is the number of observations, $M$ is the dimension for one class of random effects, $D$ is the dimension for all classes of random effects, $L$ is the number of classes, $d$ is the dimension for an effect of a group in a class.
}
\tiny
\begin{tabular}{ccccc}
	\toprule
	Submodel&Approach&Initialization&Log density&Recovery\\
	\midrule
	\multirow{3}{*}{$p(\uu_i,\yy|\TT,\uu_{-i})$}&No marginalization&-&$\OO(Md^2+NLd)$&-\\
	&Naive marginalization&-&$\OO(M^3+N^3)$&$\OO(M^3+N^3)$\\
	&Marginalize with lemmas&-&$\OO(Md^2+NLd+Nd^2)$&$\OO(Md^2+NLd+Nd^2)$\\
	\midrule
	\multirow{3}{*}{$p(\vv,\yy|\TT)$}&No marginalization&-&$\OO(Dd^2+NLd)$&-\\
	&Naive marginalization&-&$\OO(D^3+N^3)$&$\OO(D^3+N^3)$\\
	&Marginalize with assumptions&$\OO(D^3+NL^2d^2)$&$\OO(D^2+NLd)$&$\OO(D^2+NLd)$\\
	\bottomrule
\end{tabular}
\vspace{-10pt}
\label{tab:complexity}
\end{table}
We now show how to speed up calculations with the marginalized model using fast linear algebra methods. 
In particular, we use the matrix inversion lemma and matrix determinant lemma together with special structure in the relevant matrices.
In this section, we sometimes omit the subscript $i$ such as for $\A_i$ and $\SSU_i$ for simplicity. 
The steps in log density evaluation and recovery are summarized in Algorithm \ref{alg:marginalized_likelihiood}, and in Algorithm \ref{alg:recovery} in the appendix, \blue{with comments about their implementation and cost}.
\blue{We mainly use sparsity and tree-structure in $\A$ to make operations faster. As an overview, computing $\zz$ takes $\OO(NLd)$ time for $L$ sparse matrix multiplications of time $\OO(Nd)$ each. 
Also, evaluating $\A\sss$ and $\A^T\ttt$ both take $\OO(Nd)$ for any $\sss\in\mathbb{R}^{M}$ and any $\ttt\in\mathbb{R}^{N}$. 
With tree-structure, we will see that $\A^T\SSYI\A$ is block-diagonal and can be computed efficiently.
}

\subsection{Matrix inversion and determinant lemmas in marginalization}
The two main bottlenecks when evaluating $\log p(\yy|\TT,\uu_{-i})$ are computing $\det(\EE)$ and $\zz^T\EE^{-1}\zz$. With the matrix determinant lemma \cite{harville1998matrix}, we have that
\begin{align}
\label{eq:matrix_determinant_lemma}	\det(\EE)=\det(\A\SSU\A^T+\SSY)=\det(\SSUI+\A^T\SSYI\A)\det(\SSU)\det(\SSY).
\end{align}
By the matrix inversion lemma or the Woodbury formula \cite{petersen2008matrix} we have that
\begin{align}
\EE^{-1}=(\A\SSU \A^T+\SSY)^{-1}=\SSYI-\SSYI\A(\SSUI+\A^T\SSYI\A)^{-1}\A^T\SSYI.\notag
\end{align}
Therefore,
\begin{align}
\label{eq:matrix_inversion_lemma}
\zz^T\EE^{-1}\zz=\zz^T\SSYI\zz-\zz^T\SSYI\A(\SSUI+\A^T\SSYI\A)^{-1}\A^T\SSYI\zz.
\end{align}
By using the facts that $\SSU$ is block-diagonal, $\SSY$ is diagonal, and $\A$ has $Nd$ nonzero elements, the quantities $\det(\SSU)$, $\det(\SSY)$, $\zz^T\SSYI\zz$, and $\A^T\SSYI\zz$ can each be calculated in $\OO(Md^2+Nd)$ time. 
\ds{Unsupported}
Equations (\ref{eq:matrix_determinant_lemma}) and (\ref{eq:matrix_inversion_lemma}) contain the expressions $\FF^{-1}$ or $\det(\FF)$ for the $M \times M$ matrix $\FF:=\SSUI+\A^T\SSYI\A$, which both require $\OO(M^3)$ time when done naively. 
The following theorem shows that these quantities can be computed in $\OO((M+N)d^2)$ for LMMs.
\begin{theorem}
\label{theorem1}
If $\SSY$ is diagonal, $\SSU$ is block-diagonal with blocks of size $d\times d$, then $\FF=\SSUI+\A^T\SSYI\A$ is also block-diagonal with $d\times d$ blocks and computing $\A^T\SSYI\A$ takes $\OO(Nd^2)$. 
\end{theorem}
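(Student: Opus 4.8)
The plan is to exploit the sparsity pattern of $\A$ that comes from the tree structure of Figure~\ref{fig:tree}: each observation belongs to exactly one group of the classification, so each row of $\A$ has nonzero entries confined to a single block of $d$ columns. First I would fix notation by partitioning the columns of $\A$ into $k$ consecutive blocks of width $d$, one per group, and writing $g_\ell$ for the group to which observation $\ell$ is assigned and $\tau_\ell:=(\SSYI)_{\ell\ell}$ for the (diagonal) entries of $\SSYI$. Since $\SSUI$ is already block-diagonal with $d\times d$ blocks by hypothesis, it suffices to analyze $\A^T\SSYI\A$ and then add the two block-diagonal matrices.

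The core step is to show that the off-diagonal blocks of $\A^T\SSYI\A$ vanish. Writing the $(p,q)$ entry as $(\A^T\SSYI\A)_{pq}=\sum_{\ell=1}^N \tau_\ell \A_{\ell p}\A_{\ell q}$, I would observe that $\A_{\ell p}$ is nonzero only when column $p$ lies in block $g_\ell$, and similarly $\A_{\ell q}$ is nonzero only when column $q$ lies in block $g_\ell$. Hence each summand is nonzero only when $p$ and $q$ belong to the same block; taking $p$ in block $j$ and $q$ in block $j'\neq j$ forces every term to zero, so the $(j,j')$ block is the zero matrix. This establishes that $\A^T\SSYI\A$ is block-diagonal with $d\times d$ blocks, and since $\SSUI$ shares this block structure, $\FF=\SSUI+\A^T\SSYI\A$ inherits it, proving the first claim.

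For the complexity, I would recast the computation as a sum of rank-one updates restricted to the relevant diagonal block. Let $\aaa_\ell\in\mathbb{R}^d$ denote the (at most $d$) nonzero entries of row $\ell$, which sit in block $g_\ell$. Then observation $\ell$ contributes exactly $\tau_\ell\,\aaa_\ell\aaa_\ell^T$ to the $(g_\ell,g_\ell)$ block and nothing elsewhere. Each such outer product costs $\OO(d^2)$, and accumulating over all $N$ observations costs $\OO(Nd^2)$; because the off-diagonal blocks are identically zero, no work is ever spent on them.

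The step I expect to require the most care is making the sparsity argument fully rigorous rather than merely pictorial: I would need to tie the informal tree structure to the precise statement that each row of $\A$ has support in a single block, which follows from the fact that each observation belongs to exactly one group per classification (so $\A$ has at most $d$ nonzeros per row). Once that is pinned down, the vanishing of the cross-block sums and the outer-product accounting are routine bookkeeping.
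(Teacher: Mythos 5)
Your proposal is correct and follows essentially the same route as the paper: both arguments rest on the fact that each row of $\A$ has its support confined to the single block of $d$ columns for its group, which makes the off-diagonal blocks of $\A^T\SSYI\A$ vanish (the paper phrases this as a lemma that $\C_i^T\C_j=\zero$ for distinct column blocks, while you argue it entrywise), and both obtain the $\OO(Nd^2)$ cost by accumulating one $d\times d$ outer-product contribution per observation, exactly as in the paper's sum $\sum_{i=1}^N\rr_i^T\rr_i'$ of sparse row outer products.
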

\begin{proof}
The proof uses the tree-structure in $\A$. For details, see Appendix \ref{sec:proof1}.
\end{proof}
Therefore, it is $\OO((M+N)d^2)$ to compute $\det(\FF)$ and $\FF^{-1}$. 
Combined with other parts in the formulas, the overall complexity is $\OO(Md^2+NLd+Nd^2)$. 
In LMMs, $d$ is usually small, so the complexity with marginalization can be viewed as the same as the complexity without marginalization. 
\subsection{Speeding up the recovery step}
\label{sec:rec}
Different from evaluating $\log p(\yy|\TT,\uu_{-i})$, ancestral sampling from $p(\uu_i|\TT,\yy,\uu_{-i})$ is only performed once for each posterior sample. 
When sampling from $p(\uu_i|\TT,\yy,\uu_{-i})$, computing $\M$ directly is also costly.
With the matrix inversion lemma, we have
\begin{align}
\M&=\SSU\A^T(\A\SSU\A^T+\SSY)^{-1}\notag\\
&=\SSU\A^T\SSYI-\SSU\A^T\SSYI\A(\SSUI+\A^T\SSYI\A)^{-1}\A^T\SSYI.  \label{eq:M}
\end{align}
With this expression, the mean variable $\mm+\M\zz$, then is evaluated in $\OO((M+N)d^2)$, by computing $\SSUI+\A^T\SSYI\A$ in the same way as Line 2 of Algorithm \ref{alg:marginalized_likelihiood}.
For the covariance variable $(\I-\M\A)\SSU$, we have from the reversed application of the matrix inversion lemma that 
\begin{align}
(\I-\M\A)\SSU&=\SSU-\SSU\A^T(\A\SSU\A^T+\SSY)^{-1}\A\SSU\notag\\
&=(\SSUI+\A^T\SSYI\A)^{-1}.\notag
\end{align}
Note that $\FF=\SSUI+\A^T\SSYI\A$ is all block diagonal. For a block diagonal matrix with $k$ blocks of size $d\times d$, the time complexity for a Cholesky decomposition is $\OO(kd^3)=\OO(Md^2)$. Combined with the complexity of computing $\zz$, the recovery step takes $\OO(Md^2+NLd+Nd^2)$ time.



\begin{algorithm}[t]
\caption{Evaluating $\log p(\yy|\TT,\uu_{-i})$. Each $\A_i$ is an $N\times M_i$ sparse matrix with $Nd$ elements and tree structure. $\SSY$ is $N\times N$ diagonal. $\SSU$ is $M\times M$ block-diagonal with block size $d$.
}\label{alg:marginalized_likelihiood}
\begin{algorithmic}[1]
	\State $\zz=\yy-\sum_{j\neq i}\A_j\uu_j-\A_i\mm_i-\bb$\Comment{Sparse matrix multiplication in $\OO(NLd)$ time}
	\State $\FF=\SSUI+\A^T\SSYI\A$\Comment{Block diagonal computation in $\OO((M+N)d^2)$ time}
	\State $\xx=\A^T\SSYI\zz$\Comment{Sparse matrix multiplication in $\OO(Nd)$ time}
	\State $a=\log\det(\FF)+\log\det(\SSU)+\log\det(\SSY)$\Comment{Determinants in $\OO(Md^2)$ time}
	\State $b=\zz^T\SSYI\zz-\xx^T\FF^{-1}\xx$ \Comment{Quadratic form in $\OO(N+Md)$ time}
	\State \Return $-\frac{1}{2}(a+b)+C$
\end{algorithmic}
\end{algorithm}

\section{Marginalizing multiple effects with additional assumptions}
\label{sec:prop}
\label{sec:opt2}
We have shown that it is efficient to marginalize one class of random effects. With additional practical assumptions, it is possible to marginalize all classes of random effects for efficient HMC inference. Instead of separating different classes of random effects, LMMs can also be written as 
$ \vv\sim\mathcal{N}(\mm,\SSV),\quad \yy\sim\mathcal{N}(\B\vv+\bb,\SSY)$,
where $\B=[\A_1,...,\A_L]$ and $\vv=[\uu_1^T,...,\uu_L^T]^T$. We define that $D=\sum_{i=1}^LM_i$. The matrix inversion and determinant lemmas can still be applied to marginalize $\vv$ out, but the combined matrix $\B$ does not have the special structure of $\A_i$ we exploited in Section~\ref{sec:marginalization}.
More specifically, the computation of $\det(\FF)$ and the evaluation of $\FF^{-1}$ for $\FF=\SSVI+\B^T\SSYI\B$ both become non-trivial. We introduce additional assumptions to show that they can be solved faster in some special cases. For the general case, see the discussion section.
The assumption we make is that $\SSV=\ssV\I$ and $\SSY=\ssY\I$, where $\ssV,\ssY$ are scalars that either belong to $\TT$ or are fixed non-random parameters. This means that all effects share the same variance and all observations share the same noise scale. These assumptions are not as restrictive as it may appear. 
If the underlying distribution is $\uu_i \sim\mathcal{N}(\mm_i,\sigma_i^2\I)$ where $\sigma_i$ is a fixed parameter, it is possible to reparameterize this distribution as $\uu_i'\sim\mathcal{N}(\zero,\I), \,\A_i'=\sigma_i\A_i, \,\bb'=\bb+\B\mm_i$, and use $\uu_i',\A_i',\bb'$ in place of $\uu_i,\A_i,\bb$. Then $\SSV$ becomes a scaled identity matrix. Also, in many models, the noise scale for different observations is the same, making $\SSY$ a scaled identity matrix as well.


In practice, if the assumptions are satisfied, marginalization can be done in $\OO(D^2+Nd)$ time with $\OO(D^3+NL^2d^2)$ preprocessing. Details are provided in Appendix~\ref{sec:detail_scaled}.

\section{Related Work}

While many works aim to improve HMC directly~\cite{ver2021hamiltonian, grumitt2022deterministic,robnik2023microcanonical,WangW23b}, a number of other works focus on model transformation. 
Non-centered parameterization \cite{papaspiliopoulos2007general} is a widely used trick among MCMC users to alleviate slow sampling in difficult posterior distributions.
However, there is no general way to know whether a non-centered parameterization will be beneficial~\cite{yao2018yes}. 
Variationally inferred parameterization \cite{gorinova2020automatic} proposes to learn a model parameterization from a specified family that will lead to effective sampling.
In \citet{parno2018transport} and \citet{hoffman2019neutra}, preconditioners for HMC are learned to transform the model to be approximately isotropic Gaussians.
Marginalization differs from reparameterization in that it reduces the problem dimension as well as potentially alleviating difficult characteristics such as funnels, so it has two mechanisms to improve MCMC efficiency.
The Laplace approximation (LA) is one way to approximately marginalize variables in MCMC~\cite{rue2009approximate, margossian2020hamiltonian, silverman2022bayesian}, but it may be difficult to quantify the error or recover the marginalized variables.

Marginalization, or Rao-Blackwellization, has been an important topic in Bayesian inference and probabilistic programming. In Gibbs sampling, marginalization is usually called collapsing~\cite{liu1994collapsed}. Collapsed Gibbs sampling has been developed for latent Dirichlet allocation \cite{porteous2008fast} and LMMs \cite{papaspiliopoulos2020scalable}. We explore marginalization in the context of HMC, which induces different considerations. \blue{Methods with HMC do not have to make the conditional distributions of the marginalized model tractable.} 
Marginalization is also related to symbolic inference in probabilistic programming. 
Hakaru \cite{narayanan2016probabilistic} and PSI \cite{gehr2016psi,gehr2020lambdapsi} are systems for performing exact Bayesian inference by symbolically marginalizing all latent variables. To marginalize discrete variables, \citet{gorinova2021conditional} propose an information flow type system. Another line of related work is delayed sampling \cite{murray2018delayed,atkinson2022semi}, which automates marginalization of variables within Rao-Blackwellized particle filters \cite{murphy2001rao}. \citet{lai2023automatically} developed an automatic system for marginalizing variables in HMC, but is limited to scalar variables so cannot leverage vectorization and forces users to write models with univariate distributions.

Linear algebra tricks have been widely utilized in various machine learning algorithms, such as ridge regression \cite{van2015lecture}, Gaussian processes \cite{seeger2004gaussian} and Kalman filters \cite{sarkka2023bayesian}. Recently, frameworks \cite{seeger2017auto,gardner2018gpytorch,potapczynski2024cola} have been proposed to ease the implementation of fast linear algebras in machine learning algorithms. Marginalization in Bayesian models may be an interesting application of those frameworks.

Fast and scalable inference for LMMs has been studied in the context of maximum likelihood estimation \cite{gao2020estimation}, variational EM \cite{ghandwani2023scalable}, Gibbs sampling \cite{papaspiliopoulos2023scalable} and numerical integration \cite{greengard2023fast}. We are the first to consider speeding up the inference of LMMs with HMC. There is also a recent trend in integrating random effects into deep neural networks for correlated data \cite{simchoni2023integrating} or personalization \cite{simchoni2021using,shi2022generalized, wortwein2023neural} with parameters estimated by maximum likelihood. 
\section{Experiments}
We conduct experiments on LMMs from various disciplines using the \camread{default} no-U-turn sampler (NUTS) \cite{hoffman2014no} from NumPyro \cite{bingham2019pyro,phan2019composable}, \camread{which has an adaptive step size with dual averaging, adaptive and diagonal mass matrix, target acceptance probability of 0.8, and maximum tree depth of 10. For the ETH instructor evaluation model, we set the maximum tree depth to 12 to overcome difficulties performing inference without marginalization in preliminary experiments. For all models, we use weakly informative priors unless specified. In general, our conclusion is insensitive to the choice of hyperparameters and priors.}
\ds{What do you mean by "orthogonal"? Maybe say "conclusions are insensitive to choices of".}
For all experiments, we collect 10,000 warm up samples for tuning, and 100,000 samples for evaluation, and evaluate performance via effective sample size
(ESS) and running time.  
\subsection{Marginalization in cross-effects models}
\label{sec:inst_eval}
Cross-effects models are a type of LMM that have more than one class of random effects (i.e. $L>1$). Usually each observation 
belongs to one subject group (e.g. individuals, animals) and one item group (e.g. questions, objects). 
The correlation among latent variables can create severely challenging geometry that slows down the sampling of HMC. With our idea, it is possible to marginalize one or more group of effects from the model, reducing the dimension of latent space for faster sampling and better geometry.

\textbf{ETH instructor evaluations} An example cross-effects model describes university lecture evaluations by students at ETH \cite{lme4}. The dataset records $N=73421$ ratings, where each rating $y_n$ comes from student $s_n$ for professor $p_n$ teaching a course from department $d_n$, with $t_n$ indicating whether the professor is teaching outside their own department. There are a total of $M_1=2972$ students, $M_2=1128$ professors and $M_3=14$ departments. We use a version of the model from the document of Tensorflow probability~\cite{dillon2017tensorflow}.
The model is
\begin{gather}
\text{Likelihood}: y_n\sim\mathcal{N}(u_{1,s_n}+u_{2,p_n}+u_{3,d_n}+\alpha+\beta t_n,\sigma^2),\notag\\
\text{Prior}: u_{1,i}\sim\mathcal{N}(0,1),\ u_{2,j}\sim\mathcal{N}(0,1),\ u_{3,k}\sim\mathcal{N}(0,1),\ \alpha\sim\mathcal{N}(0,5),\ \beta\sim\mathcal{N}(0,1),\ 
\sigma\sim\mathcal{N}^{+}(0,1),\notag
\end{gather}
where $1\le i\le M_1$, $1\le j\le M_2$ and $1\le k\le M_3$. Given the dataset, we wish to learn about the latent variables $\uu_1$, $\uu_2$, $\uu_3$, $\alpha$, $\beta$ and $\sigma$.
HMC is the most direct way to sample those variables, but the dimension and complicated relations make it inefficient. Marginalization can be applied to one of the effects, $\uu_1$, $\uu_2$ or $\uu_3$. 
We report the running time of sampling from the model with and without marginalization in Table \ref{tab:inst_time}. 
We found that marginalizing any group of random effects improves the sampling speed of HMC.
However, the improvements are not necessarily predicted by the dimension of marginalized variable: HMC is faster when marginalizing $\uu_3$ than when marginalizing $\uu_1$ even though $\uu_1$ has 200-times higher dimension than $\uu_3$.
In Figure \ref{fig:inst_ess}, the ESS for each variable is reported.
Without marginalization, sampling $\uu_2$ and $\uu_3$ are both difficult compared to sampling $\uu_1$, and HMC becomes more efficient when marginalizing either of these variables, so we conjecture that $\uu_2$ and $\uu_3$ are responsible for the difficulty for sampling in the original model.
In this model, all random effects are independent and have the same variance, so $\SSU$ is a scaled identity matrix and we can marginalize all random effects efficiently.
This approach is observed to be the most efficient in our experiments, despite having quadratic complexity in $D$. Overall, marginalization never hurts ESS, and runs faster. We expect that any marginalization strategy works better than HMC in the original model, a finding which will be consistent across experiments. \camread{Additional results of this experiment, including trace plots and $\hat{R}$ diagnosis, are included in Figure \ref{fig:inst_trace} and Table \ref{tab:inst_rhat} in the Appendix.}
\begin{table}[t]
\centering
\caption{Running time in seconds for HMC, with or without marginalization. Mean and standard deviation over 5 independent runs are reported. Experiments are run on NVIDIA A40.}
\small
\begin{tabular}{cccccc}
	\toprule
	Method&No marginalization&Marginalize $\uu_1$&Marginalize $\uu_2$&Marginalize $\uu_3$&Marginalize $\uu$\\
	\midrule
	Time (s)&13417 (98)&5004 (1468)&2607 (3)&3071 (4)&\textbf{631} (12)\\
	\bottomrule
\end{tabular}
\label{tab:inst_time}
\end{table}
\begin{figure}[t]
\centering
\includegraphics[width=\textwidth]{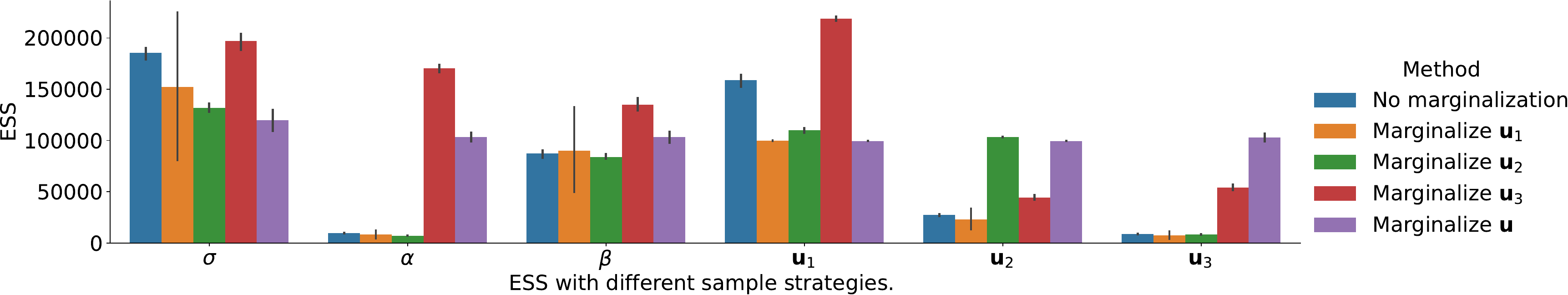}
\vspace{-10pt}
\caption{Average ESS for each variable on the instruction evaluation model with different HMC strategies. Numbers above the sample size 100,000 indicate effective sampling. }
\vspace{-10pt}
\label{fig:inst_ess}
\end{figure}

\subsection{Marginalization vs reparameterization}
To tackle bad geometry in statistical models, another model transformation is non-centered parameterization, or reparameterization \cite{papaspiliopoulos2007general}. 
Reparameterization converts the distribution of $z\sim\mathcal{N}(\mu,\sigma^2)$ into $\epsilon\sim\mathcal{N}(0,1)$ and $z=\epsilon\sigma+\mu$. Reparameterization is especially useful for funnel shapes in hierarchical models. We note that when applicable, marginalization is able to solve a broader class of problems. We compare marginalization and reparameteriation on the grouse ticks model.

\textbf{Grouse ticks} The dataset \cite{lme4} contains observations $\yy$ of the the number of ticks on the heads of red grouse chicks in the field. 
Each observation $y_k$ comes from brood $b_k$ in location $l_k$ during year $e_k$ at altitude $a_k$, where year and altitude give fixed effects, and there are random effects $\uu_1$ and $\uu_2$ corresponding to brood and location. 
There are $N=403$ observations, $M_1=118$ broods and $M_2=63$ locations. We define the hierarchical model as follows:
\begin{gather}
\text{Likelihood}: y_k\sim\mathcal{N}(u_{1,b_k}+u_{2,l_k}+\beta_ee_k+\beta_aa_k,\sigma_t^2)\notag\\
\text{Prior}:\mu_{1} \sim\mathcal{N}(0,1),\ \sigma_{1}\sim \HC(5),\ \mu_{2} \sim\mathcal{N}(0,1),\ \sigma_{2}\sim \HC(5), \notag\\
\beta_e\sim\mathcal{N}(0,1),\ \beta_a\sim\mathcal{N}(0,1),\ u_{1, i}\sim\mathcal{N}(\mu_{1}, \sigma_{1}^2),\ u_{2, j}\sim\mathcal{N}(\mu_{2}, \sigma_{2}^2),
\sigma_t\sim\HC(5),\notag
\end{gather}
where $i=1,...,M_1$, $j=1,...,M_2$, $k=1,...,N$ and each $y_k$ is observed. 
The correlation between $\sigma$ and $\uu$ creates the funnel shape that makes vanilla HMC inefficient. Nevertheless, it is possible to apply either marginalization or reparameterization to each random effect. In Figure \ref{fig:grouse}, we plot the distributions of samples for variable pairs $(\sigma_1, u_{1,1})$ and $(\sigma_2, u_{2,1})$ with different combinations of marginalization and reparameterization. 
There is a difficult correlation between $\sigma_2$ and $\uu_2$. 
After applying marginalization or reparameterization to $\uu_2$, HMC manages to explore the funnel region (at low values of $\sigma_1$).
However, we find that only samplers that marginalize $\uu_2$ report zero divergent transitions after warm-up. \camread{Such behavior is consistent with different random seeds. See Table \ref{tab:grouse_divergence} in the Appendix.}
Also, \camread{the distribution of divergent samples is related to specific parameters when reparameterizing $\uu_2$, implying that reparameterization introduces pathologies that create challenges for HMC inference.} 
In addition, we find that reparameterization does not improve the running time of HMC, while marginalizing $\uu_2$ speeds up sampling by about 20\%. 

\begin{figure}[t]
\centering
\includegraphics[width=\textwidth]{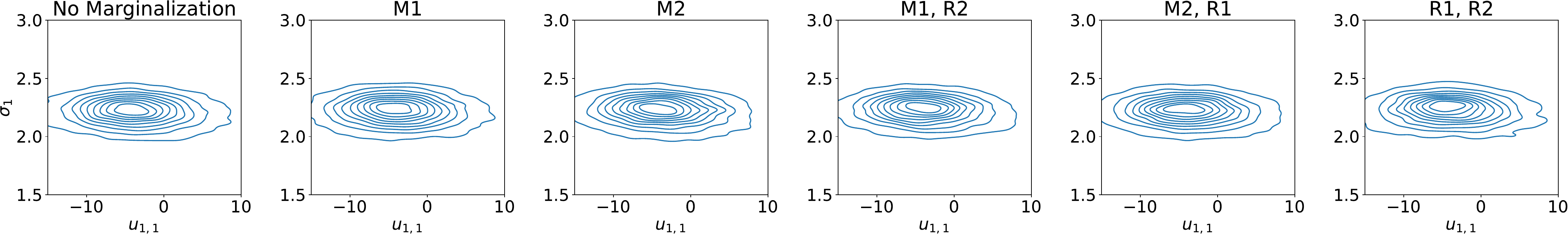}
\includegraphics[width=\textwidth]{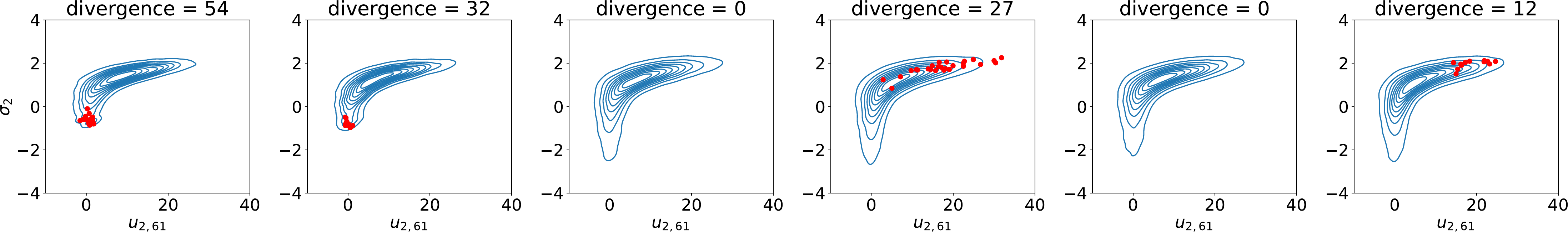}
\vspace{-10pt}
\caption{Distribution of 10,000 samples for variable pairs $(\sigma_1, u_{1,1})$ and $(\sigma_2, u_{2,61})$ on the grouseticks model with different methods. We use M1 to represent marginalizing $\uu_1$, M2 to represent marginalizing $\uu_2$, R1 to represent reparameterizing $\uu_1$, R2 to represent reparameterizing $\uu_2$. \camread{The number of divergences for each case are reported, with locations shown as red dots. We choose $u_{2,61}$ to demonstrate the distribution of divergences when reparameterizing $\uu_2$.} 
}
\vspace{-5pt}
\label{fig:grouse}
\end{figure}
\subsection{Benefits from vectorization}
In theory, marginalization with LMMs can be done by constructing a graphical model for scalar random variables and performing automatic marginalization as in \cite{lai2023automatically}. But it is more efficient to marginalize in a vectorized way. We demonstrate the benefits from vectorization in Table \ref{tab:time_vec}. Both marginalization strategies are performed on two hierarchical linear regression models, the electric company model \cite{gelman2006data} and the pulmonary fibrosis model \cite{osic-pulmonary-fibrosis-progression}. We find that vectorized marginalization is much more efficient for sampling from the two models.
\begin{table}[t]
\centering
\caption{Compilation time $T_c$ and running time $T_r$ in seconds for marginalized MCMC \cite{lai2023automatically}, with or without vectorization. Mean and std across 5 independ runs are reported.}
\small
\begin{tabular}{ccccc}
	\toprule
	Model&$T_c$ of \cite{lai2023automatically}&$T_r$ of \cite{lai2023automatically}&$T_c$ of ours&$T_r$ of ours\\
	\midrule
	Electric company&552 (4)&1249 (95)&7 (0)&252 (23)\\
	Pulmonary fibrosis&727 (11)&2208 (80)&10 (1)&178 (3)\\
	\bottomrule
\end{tabular}
\vspace{-10pt}
\label{tab:time_vec}
\end{table}
\subsection{Applications in cognitive sciences}
Hierarchical Bayesian inference with LMMs 
has wide applications in cognitive science~\cite{nicenboim2021introduction}. We highlight the effectiveness of marginalization with 9 datasets from cognitive science (Table \ref{tab:spec}). They cover various settings, with one or two random effects, normal or log-normal likelihoods, on CPU or GPU. Experiments that are slow on CPU are performed on GPU. 
Each dataset corresponds to an LMM where both the intercept and the coefficient include random effects. 
Details of all the models can be found in Appendix \ref{sec:models}. Results are summarized in Figure \ref{fig:cog_time}. Marginalization usually improves the sampling speed of HMC and consistently improves efficiency measured by ESS per iteration.

\begin{table}[t]
\tiny
\caption{Specifications of the datasets from cognitive sciences. Details of each model are provided in Appendix \ref{sec:models}. GPU models run on an NVIDIA RTX 2080ti GPU. CPU models run on one Intel Xeon Gold 6148 processor.}
\begin{tabular}{cccccccccc}
	\toprule
	&dillonE1\cite{dillon2013contrasting}&
	dutch\cite{frank2016cross}&
	eeg\cite{nieuwland2018large}&
	english\cite{vasishth2010short}&
	gg05\cite{grodner2005consequences}&
	mandarin\cite{wu2008processing}&
	mandarin2\cite{vasishth2013processing}&
	pupil\cite{wahn2016pupil}&
	stroop\cite{ebersole2016many}\\
	\midrule
	$N$&2855&372&26176&768&672&547&595&2228&3058\\
	$L$&2&2&2&2&2&2&2&1&1\\
	$M_1$&40&24&334&48&42&37&40&20&50\\
	$M_2$&48&16&80&16&16&15&15&-&-\\
	Likelihood&LogNormal&Normal&Normal&Normal&LogNormal&LogNormal&LogNormal&Normal&LogNormal\\
	Device&GPU&GPU&GPU&CPU&CPU&CPU&GPU&GPU&GPU\\
	\bottomrule
\end{tabular}
\label{tab:spec}
\end{table}
\begin{figure}[t]
\centering
\includegraphics[width=\textwidth]{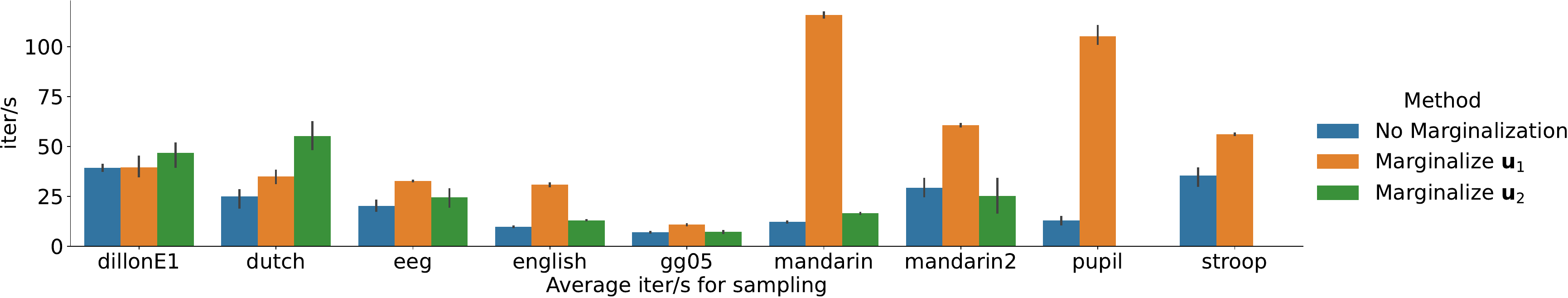}
\includegraphics[width=\textwidth]{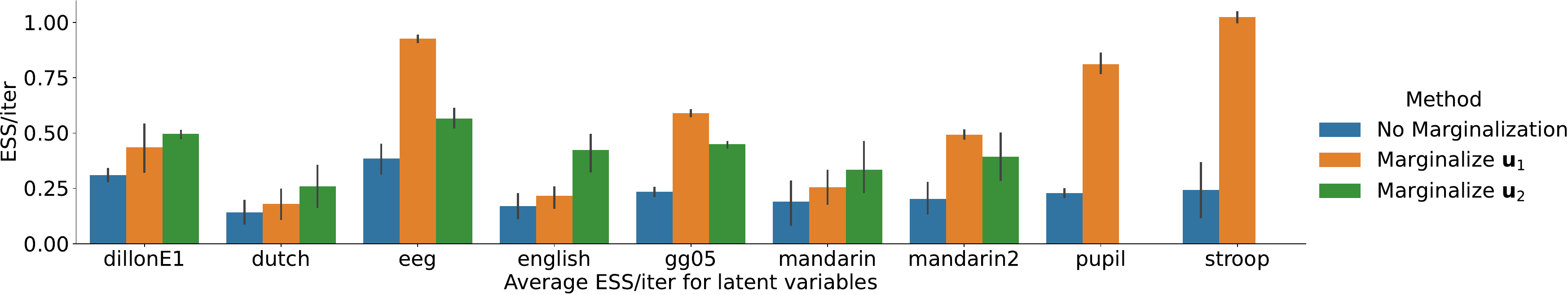}
\vspace{-10pt}
\caption{Experimental results for the 9 cognitive science datasets with and without marginalization. Each experiment is performed 5 times with different random seeds. Marginalization usually improves sampling speed measured by iterations per second (iter/s) and sample efficiency measured by ESS per iteration (ESS/iter).}
\vspace{-10pt}
\label{fig:cog_time}
\end{figure}
\section{Discussion}
There are several promising directions for future work.
\subsection{\camread{Marginalization vs Rao-Blackwellization}}
\camread{
Marginalization is related to Rao-Blackwellization.
This paper focuses on marginalization, which improves the speed of obtaining samples from the remaining variables by improving mixing times, reducing the cost per iteration, or both. 
Combining marginalization with Rao-Blackwellization is an interesting avenue for future work.
More formally, if one is interested in some expectation $\mathbb{E}_{(\TT,\uu)\sim p(\TT,\uu|\yy)}[f(\TT,\uu)]$ in an LMM, there is a Monte Carlo estimator
$$
E_1=\frac{1}{N} \sum_{i=1}^N f(\TT^i, \uu^i),
$$
where $(\TT^i, \uu^i)\sim p(\TT, \uu|\yy)$ and $N$ is the sample size. Marginalization is a trick to improve the efficiency of the posterior sampling, so that we can achieve the same estimation variance with smaller $N$ or less runtime\ds{Add "or less runtime"?}. 
At the same time, we also have access to a conditional distribution that is useful for Rao-Blackwellization. 
If the effects variable $\uu$ can be marginalized we have both an approximate posterior for $p(\TT|\yy)$ and an analytical conditional distribution $p(\uu|\TT,\yy)$. 
With Rao-Blackwellization we have that $\mathbb{E}_{(\TT,\uu)\sim p(\TT,\uu|\yy)}[f(\TT,\uu)]=\mathbb{E}_{\TT\sim p(\TT|\yy)}[\mathbb{E}_{\uu\sim p(\uu|\TT,\yy)}[f(\TT,\uu)]]$. In such case, another Monte Carlo estimator can be constructed:
$$
E_2=\frac{1}{N} \sum_{i=1}^N \mathbb{E}_{\uu\sim p(\uu|\TT,\yy)}\left[f(\TT^i, \uu)\right],
$$
where $\TT^i\sim p(\TT|\yy)$. 
For some functions, such as those that are polynomial in $\uu$, the inner expectation can be computed exactly using properties of Gaussians. 
In other cases, the inner expectation can be estimated cheaply via Monte Carlo using exact samples from $p(\uu | \TT_i, \yy)$.
}
\subsection{Marginalizing multiple effects in general models}
In Section \ref{sec:opt2}, we proposed to marginalize multiple classes of random effects by assuming a scaled identity covariance matrix. 
To marginalize multiple effects in general models, a possibility is to compute $\zz^T\EE^{-1}\zz$ and estimate $\det(\EE)$ and the corresponding gradients with conjugate gradient (CG) solvers \cite{domke2012generic, gardner2018gpytorch}. 
However, this approach uses stochastic estimators for the determinant and gradients, which introduce bias into the HMC dynamics.
These biases can be corrected through pseudo-marginalization \cite{andrieu2009pseudo}, but it is unclear how significantly the extra stochasticity will affect the sampling. 
Another possible way to marginalize multiple effects for LMMs is to introduced the balanced levels assumption~\cite{papaspiliopoulos2020scalable}. 
We leave these ideas for future exploration.

\subsection{Beyond normal \camread{likelihoods}}
In this work, we only consider normal or log-normal likelihoods, but our method can be easily generalized to other deterministic transformation of normal likelihood. 
This implies that marginalization can benefit regression with most continuous predictors given proper link functions. Another potential future direction is to marginalize classification models with probit regressions \cite{agresti2015foundations}. Marginalization will turn probit models into multivariate probit models as $\A\SSU\A^T+\SSY$ is a dense covariance matrix, which may require a simulation-based method \cite{chib1998analysis} or variational Bayes \cite{mandt2017sparse}. It will be interesting to see how ideas from multivariate probit regression could be fit into an HMC pipeline. \camread{In a broader context, marginalization is related to data augmentation techniques that "create" conjugacy for non-normal likelihoods or non-normal effects. Those techniques were developed for Gibbs sampling, e.g. \cite{fruhwirth2009improved,polson2013bayesian}, but may also be useful for HMC. }
\ds{Risky (and not necessary) to say "but have not been explored in the context of HMC". Suggest "and may also be useful for HMC".}

\subsection{Integration with probabilistic programming}
We have developed a tool to speed up the HMC inference for LMMs. In our implementation, the marginalized likelihood $p(\yy|\TT,\uu_{-i})$ is defined as a special type of parametric distribution available to the user, and the recovery distribution $p(\uu_i|\TT,\uu_{-i},\yy)$ is a function called after sampling. In our experiments, marginalization never hurt sampling efficiency measured by ESS/s, and usually helped.
Thus, it would be desirable to always marginalize one group of random effects when the model is an LMM. 
Future work could aim to automatically apply such transformations to user-specified LMMs.
\ds{I made an editing pass below}
\camread{There are two possible high-level approaches. 
The first is to perform marginalization starting with a model described using a high-level abstraction such as an R formula. Then, when compiling the high-level model description into a concrete model (e.g., a probabilistic program), we can marginalize one or more of the effects using our methods.
The second is to perform marginalization starting with a user-written probabilistic program representing an LMM. 
In this case, some compilation or program tracing technique will be needed to convert the user’s program to a model representation suitable for manipulation. For example, \citet{lai2023automatically} used program tracing to construct a graphical model representation that could be programmatically analyzed and transformed. To apply this methodology to LMMs, a special parser would also be needed to match the models to LMMs. 
}

\section*{Acknowledgement}
\camread{The authors thank Yuling Yao and the anonymous reviewers for comments that greatly improved the manuscript. This material is based upon work supported by the National Science Foundation under Grants \#1749854, \#2045900.}
\bibliographystyle{plainnat}
\bibliography{reference}


\newpage
\appendix
\setcounter{theorem}{0}
\section{Notation table}
We summarize the important symbols used in the paper.
\begin{table}[H]
\begin{tabular}{ll}
	\toprule
	Symbols&Description\\
	\midrule
	$N$&Number of observations, and dimension of $\yy$\\
	$M$, $M_i$&Dimension for all effects in one class of mixed effects\\
	$L$&Number of classes of mixed effects\\
	$D$&Dimension for all mixed effects\\
	$d$&Dimension for effects of a group in a class\\
	$k$, $k_i$&Number of groups in a class\\
	$\alpha$&Intercept for linear regression\\
	$\beta$&Slope for linear regression\\
	$\sigma$&Standard deviation\\
	$u$, $\uu$&Random effects\\
	$\vv$&Concatenated random effects\\
	$y$, $\yy$&Observations\\
	$c$, $t$&Covariates, or treatments\\
	$\TTT$&A prior variable sampled from half-normal distributions\\
	$\LLL$&A prior variable sampled from LKJ distributions\\
	$g$&Grouping variables\\
	$\TT$&Global variables, including priors and fixed effects\\
	$\mm$&Mean of random effects\\
	$\A$&Design matrix for random effects\\
	$\B$&Concatenated design matrices\\
	$\bb$&Intercept term in the canonical form for LMMs\\
	$\SSU$&Covariance matrix for a class of random effects\\
	$\SSY$&Covariance matrix for the observations\\
	$\SSV$&Covariance matrix for all random effects\\
	$\ssV$&Scale for $\SSV$ with the scaled identity assumption\\
	$\ssY$&Scale for $\SSY$ with the scaled identity assumption\\
	$\M$&A shared matrix in the reversed model\\
	$\zz$&Difference between observation and mean of the marginalized likelihood\\
	$\EE$&A dense $N\times N$ matrix that is difficult to directly compute\\
	$\FF$&The core matrix after applying the two linear algebra lemmas\\
	$\GG$&An intermediate matrix in the implementation\\
	$\xx$&An intermediate vector in the implementation\\
	$\rr$&A row of $\A$\\
	$\cc$&A column of $\A$\\
	$\C$&A block of $d$ columns of $\A$\\
	$\Q$&The eigenvector matrix for eigendecompsition of $\B^T\B$\\
	$\LL$& The eigenvalue matrix for eigendecomposition of $\B^T\B$\\
	\bottomrule
\end{tabular}
\label{tab:notation}
\end{table}
\newpage
\section{Proofs and details}
\subsection{Proof of Theorem 1}
\label{sec:proof1}
We first review the tree structure of the matrix $\A$. $\A$ is an $N\times M$ matrix where every \blue{block of $d$ columns corresponds to the effects for one group (e.g., an individual subject, age, school, or gender)}. For example, if $N=3$, $k=2$ and $d=2$, one possible graphical model is as below.
\begin{figure}[H]
\begin{center}
	\begin{tikzpicture}
		\node[round,xshift=-0.75cm, yshift=-1cm](x1){$\uu_1$};
		\node[round,xshift=0.75cm, yshift=-1cm](x2){$\uu_2$};
		\node[round,xshift=-1.5cm, yshift=-2cm](y1){$y_1$};
		\node[round,xshift=0cm, yshift=-2cm](y2){$y_2$};
		\node[round,xshift=1.5cm, yshift=-2cm](y3){$y_3$};
		
		\draw [arrow, ->] (x1) -- (y1);
		\draw [arrow, ->] (x1) -- (y2);
		\draw [arrow, ->] (x2) -- (y3);
	\end{tikzpicture}
\end{center}
\caption{A tree-structured model conditioned on $\TT$.}
\label{fig:tree-supple}
\end{figure}
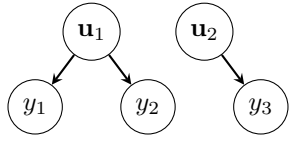
Each $\uu_j\in\mathbb{R}^2$. If the coefficients are all $1$s, then 
\begin{align}
\A=\left(\begin{matrix}
	1&1&0&0\\
	1&1&0&0\\
	0&0&1&1
\end{matrix}
\right).\notag
\end{align}
To generalize, if for $y_i$, the grouping variable is $g_i$, then in the $i$th row of $\A$, only \blue{$\A_{i,j:k}$ can be nonzero for $j=(g_i-1)d+1$ and $k=g_id$}. We consider \blue{three} representations of the matrix $\A$. By rows, 
\begin{align}
\A=\left(\begin{matrix}
	\rr_{1}\\
	\rr_{2}\\
	...\\
	\rr_{N}
\end{matrix}
\right),\notag
\end{align}
\ds{Minor: I think it would be more common to keep everything a column vector and use $\rr_1^T$ through $\rr_N^T$ for the rows. I think it would lead to more familiar expressions--- like $\rr_i (\rr'_i)^T$ for an outer product, instead of $\rr_i^T \rr_i'$, which looks like an inner product---later.}
\blue{by columns,
\begin{align}
	\A &=\left(\begin{matrix}
		\cc_1 &
		\cc_2&
		...&
		\cc_{kd}
	\end{matrix}
	\right)
	,\notag
\end{align}
and by blocks of $d$ columns,}
\begin{align}
\A
&=\left(\begin{matrix}
	\A_{:,1:d}&
	\A_{:,d+1:2d}&
	...&
	\A_{:,(k-1)d+1:kd}
\end{matrix}
\right)\notag\\
&=\left(\begin{matrix}
	\C_{1}&
	\C_{2}&
	...&
	\C_{k}
\end{matrix}
\right)\notag
\end{align}
where each $\C_i$ ($i=1,2,...,k$) is $N\times d$. Now we restate and prove Theorem \ref{theorem1}.
\begin{theorem}
If $\SSY$ is diagonal, $\SSU$ is block-diagonal with blocks of size $d\times d$, then $\FF=\SSUI+\A^T\SSYI\A$ is also block-diagonal with $d\times d$ blocks and computing $\A^T\SSYI\A$ takes $\OO(Nd^2)$ time.
\end{theorem}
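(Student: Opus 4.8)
The plan is to exploit the disjoint-support structure of the column blocks of $\A$ that the tree structure induces. First I would write $\SSYI=\text{diag}(w_1,\ldots,w_N)$, which is legitimate since $\SSY$ is diagonal, and work with the column-block decomposition $\A=(\C_1\ \C_2\ \cdots\ \C_k)$ introduced in the preamble, where each $\C_j$ is $N\times d$ and collects the columns for group $j$. The $(j,j')$ block of $\A^T\SSYI\A$ is then exactly $\C_j^T\SSYI\C_{j'}$, so it suffices to understand these blocks.

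The key step is the observation that, under the tree structure, each observation $i$ belongs to a single group $g_i$, so row $i$ of $\A$ is supported only on the columns of block $\C_{g_i}$. Equivalently, for $j\neq j'$ the nonzero rows of $\C_j$ and $\C_{j'}$ are disjoint, because no observation lies in two different groups of the same classification. Since $\SSYI$ is diagonal, $\C_j^T\SSYI\C_{j'}$ is a sum over observations $i$ of scalar multiples $w_i$ of the outer product of row $i$ of $\C_j$ with row $i$ of $\C_{j'}$, and every such term vanishes because for each $i$ at most one of the two rows is nonzero. Hence the off-diagonal blocks are zero and $\A^T\SSYI\A$ is block-diagonal with $d\times d$ blocks. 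Adding $\SSUI$, which is itself block-diagonal with $d\times d$ blocks (the inverse of a block-diagonal matrix is block-diagonal), preserves this, so $\FF=\SSUI+\A^T\SSYI\A$ is block-diagonal with $d\times d$ blocks.

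For the complexity claim I would switch to the row view $\A^T\SSYI\A=\sum_{i=1}^N w_i\rr_i\rr_i^T$, where each $\rr_i\in\mathbb{R}^{M}$ has only its $d$ entries in block $g_i$ nonzero. Each weighted outer product $w_i\rr_i\rr_i^T$ therefore contributes only to the single $d\times d$ diagonal block indexed by $(g_i,g_i)$, and forming that contribution costs $\OO(d^2)$. Accumulating these contributions in place over the $N$ observations yields the claimed $\OO(Nd^2)$ running time.

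The main obstacle is simply stating the support argument crisply: the entire theorem hinges on the single fact that each observation belongs to exactly one group per classification, which forces the cross-block terms to vanish. Once that is phrased carefully, both the block-diagonal structure and the $\OO(Nd^2)$ cost follow immediately, with no heavy computation required.
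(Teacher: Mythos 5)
Your proof is correct and follows essentially the same route as the paper: you use the column-block decomposition $\A=(\C_1\ \cdots\ \C_k)$ together with the fact that each observation's row is supported in a single block to kill the off-diagonal blocks, and the row-wise sum of weighted outer products $\sum_i w_i\rr_i\rr_i^T$ for the $\OO(Nd^2)$ cost, exactly as in the paper's two-part argument. The only cosmetic difference is that you argue the vanishing of cross blocks directly via disjoint row supports, whereas the paper states it as a small lemma proved by contradiction on individual column pairs.
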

\begin{proof}
The theorem has two parts: (a) the property of $\FF$, and (b) the computation of $\FF$. We address them with the three representations of $\A$. 

\textbf{(a) $\FF$ is block-diagonal.} Because $\SSU$ is block-diagonal, $\SSUI$ is also block-diagonal with the same sizes. Also, $\SSY$ is diagonal, so the block-diagonality of $\A^T\SSYI\A$ is the same as $\A^T\A$. We consider the column representation of $\A$, then
\begin{align}
	\A^T\A&=\left(\begin{matrix}
		\C_1^T\\
		\C_2^T\\
		...\\
		\C_k^T
	\end{matrix}
	\right)\left(\begin{matrix}
		\C_1&
		\C_2&
		...&
		\C_k
	\end{matrix}
	\right)\notag\\
	&=\left(\begin{matrix}
		\C_1^T\C_1&\C_1^T\C_2&...&\C_1^T\C_k\\
		\C_2^T\C_1&\C_2^T\C_2&...&\C_2^T\C_k\\
		...&\\
		\C_k^T\C_1&\C_k^T\C_2&...&\C_k^T\C_k
	\end{matrix}
	\right).\notag
\end{align} 
For $1\le i\le k$, $\C_i^T\C_i$ is $d\times d$. For $1\le i<j\le k$, 
\begin{align}
	\C_i^T\C_j&=\left(\begin{matrix}
		\cc_{(i-1)d+1}^T\\
		\cc_{(i-1)d+2}^T\\
		...\\
		\cc_{id}^T
	\end{matrix}
	\right)\left(\begin{matrix}
		\cc_{(j-1)d+1}&
		\cc_{(j-1)d+2}&
		...&
		\cc_{jd}
	\end{matrix}
	\right).\notag
\end{align}
The following lemma shows that $\C_i^T\C_j=\zero$.
\begin{lemma}
	For any $1\le i<j\le k$ and $1\le s,t\le d$, it holds that $\cc_{(i-1)d+s}^T\cc_{(j-1)d+t}=0$. 
\end{lemma}
\begin{proof}
	The lemma can be proved by contradiction. Suppose $\cc_{(i-1)d+s}^T\cc_{(j-1)d+t}\neq 0$. Then there exists an index $n$ such that $\cc_{(i-1)d+s}[n]\neq 0$ and $\cc_{(j-1)d+t}[n]\neq 0$. This means that in the $n$th row of $\A$, both $\A_{n,(i-1)d+s}$ and $\A_{n,(j-1)d+t}$ are non-zero. This contradicts with the tree-structure where only one group of $d$ elements can be non-zero in a row. 
\end{proof}
With the lemma, we have that $\A^T\A$ is block-diagonal, thus $\SSUI+\A^T\SSYI\A$ is also block-diagonal and each block is $d\times d$.

\textbf{(b) The computation of $\A^T\SSYI\A$ is $\OO(Nd^2)$.} Since $\SSY$ is diagonal, $\A'=\SSYI\A$ has the same \blue{pattern of zeros and nonzeros as $\A$}. We consider the row representations such that 
\begin{align}
	\A'=\left(\begin{matrix}
		\rr'_{1}\\
		\rr'_{2}\\
		...\\
		\rr'_{N}
	\end{matrix}
	\right).\notag
\end{align}
Then
\begin{align}
	\A^T\A'&=\left(\begin{matrix}
		\rr_{1}^T&
		\rr_{2}^T&
		...&
		\rr_{N}
	\end{matrix}
	\right)\left(\begin{matrix}
		\rr'_{1}\\
		\rr'_{2}\\
		...\\
		\rr'_{N}
	\end{matrix}
	\right)=\sum_{i=1}^N\rr_i^T\rr'_i.\notag
\end{align}
note that each of $\rr_i$ and $\rr'_i$ has $d$ non-zero elements. So computing $\A^T\A'$ is $\OO(Nd^2)$. 
\end{proof}

\subsection{Pseudocode for recovery after marginalizing one group of random effects}
\begin{algorithm}[H]
\caption{Sampling from $p(\uu_i|\TT,\yy,\uu_{-i})$}\label{alg:recovery}
\begin{algorithmic}
	
	\State $\zz=\yy-\sum_{j\neq i}\A_j\uu_j-\A_i\mm_i-\bb$\Comment{Sparse matrix multiplication in $\OO(NLd)$ time}
	\State $\GG=\A^T\SSYI\A$\Comment{Block diagonal computation in $\OO(Nd^2)$ time}
	\State $\FF=\SSUI+\GG$\Comment{Block diagonal computation in $\OO(Md^2)$ time}
	\State $\mm=\mm_i+\SSU(\I-\GG\FF^{-1})\A^T\SSYI\zz$\Comment{Sparse matrix multiplication in $\OO((M+N)d)$ time}
	\State $\LLL=\text{Cholesky}(\FF^{-1})$\Comment{Cholesky of block diagonal matrix in $\OO(Md^2)$ time}
	\State \Return $\uu_i\sim\text{Normal}(\mm,\LLL\LLL^T)$.
	
\end{algorithmic}
\end{algorithm}

\subsection{Details of scaled identity covariance matrices}
\label{sec:detail_scaled}
\ds{I only skimmed the math in this section. But it reads well, and seems to justify running time statements well.}
With the assumptions of scaled identity covariance matrices, all effects can be marginalized with a preprocessing of the eigendecomposition of $\B^T\B$. 

\textbf{(a) Preprocessing before HMC.} We compute
\begin{align}
\B^T\B=\Q\LL\Q^T.\notag
\end{align}

In LMMs, the computation of $\B^T\B$ is $\OO(NL^2d^2)$\footnote{Each $\A_i^T\A_j$ is $\OO(Nd^2)$, as a corollary of Theorem \ref{theorem1}.}, and the eigendecomposition of it is $\OO(D^3)$. So the overall complexity for preprocessing is $\OO(D^3+NL^2d^2)$. Compared with the HMC sampling loop that takes thousands of steps and visits the model hundreds of times each step, the cost of preprocessing is not expensive. In our attempt to marginalize all random effects for the instructor evaluation model in Section \ref{sec:inst_eval}, this step takes less than 10 seconds.

\textbf{(b) Marginalized likelihood during HMC.}
During HMC sampling, the log density $\log p(\yy|\TT)$ would be calculated, which is
\begin{align}
\log p(\yy|\TT)=-\frac{1}{2}\det\left(\EE\right)-\frac{1}{2}\zz^T\EE^{-1}\zz + C\notag
\end{align}
where $\zz=\yy-\B\mm-\bb$ and $\EE=\B\SSV\B^T+\SSY$. The computation of $\zz$ takes $\OO(NLd)$ time. With the two lemmas, we have
\begin{gather}
\det(\EE)=\det(\SSVI+\B^T\SSYI\B)\det(\SSV)\det(\SSY),\notag\\
\zz^T\EE^{-1}\zz=\zz^T\SSYI\zz-\zz^T\SSYI\B(\SSVI+\B^T\SSYI\B)^{-1}\B^T\SSYI\zz. \notag
\end{gather}

A shared matrix in the formulas is $\FF=\SSVI+\B^T\SSYI\B$. Then 
\begin{align}
\FF=\SSVI+\B^T\SSYI\B=\Q\left(\frac{1}{\ssV}\I+\frac{1}{\ssY}\LL\right)\Q^T.\notag
\end{align}
With the trick, evaluating $\det(\FF)$ reduced to $\OO(D)$ time as $\det(\Q)=1$. Also $\zz^T\EE^{-1}\zz$ becomes
\begin{align}
\zz^T\EE^{-1}\zz=\zz^T\SSYI\zz-\zz^T\SSYI\B\Q\left(\frac{1}{\ssV}\I+\frac{1}{\ssY}\LL\right)^{-1}\Q^T\B^T\SSYI\zz.\notag
\end{align}
Note that $\B^T\SSYI\zz\in\mathbb{R}^D$ can be computed in $\OO(NLd)$ time, but its multiplication with $\Q^T$ takes $\OO(D^2)$ time. Given that $\frac{1}{\ssV}\I+\frac{1}{\ssY}\LL$ is diagonal, the complexity of evaluating $\log p(\yy|\TT)$ once is then $\OO(D^2+NLd)$. 

\textbf{(c) Ancestral sampling after HMC.} 
In the recovery step, we perform ancestral sampling from $p(\vv|\TT,\yy)$. To efficiently generate samples, we give the following theorem.
\begin{theorem}
\label{theorem2}
If $\SSV=\ssV\I$, $\SSY=\ssY\I$ and $\B^T\B=\Q\LL\Q^T$, then 
\begin{align}
	\vv|\TT,\yy\sim \mathcal{N}(\mm_{\vv|\TT,\yy},\SSS_{\vv|\TT,\yy}),\notag
\end{align}
where
\begin{align}
	\mm_{\vv|\TT,\yy}&=\mm+\frac{\ssV}{\ssY}\left(\B^T-\frac{1}{\ssY}\Q\LL\left(\frac{1}{\ssV}+\frac{\LL}{\ssY}\right)^{-1}\Q^T\B^T\right)\zz,\notag\\
	\SSS_{\vv|\TT,\yy}&=\Q\left(\frac{1}{\ssV}\I+\frac{1}{\ssY}\LL\right)^{-1}\Q^T.\notag
\end{align}
\end{theorem}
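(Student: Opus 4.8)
The plan is to treat this as a standard Gaussian conditioning computation, reusing the reversed-model formulas already established in the paper and then simplifying each piece using the precomputed eigendecomposition $\B^T\B=\Q\LL\Q^T$ together with the orthogonality relations $\Q\Q^T=\Q^T\Q=\I$. No genuinely new idea is required beyond the two linear-algebra lemmas and the eigendecomposition from part (a); the work is entirely in carefully tracking the scalars $\ssV,\ssY$.

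First I would specialize the recovery distribution of Equation (\ref{eq:marginalized}) to the aggregated form $\vv\sim\mathcal{N}(\mm,\SSV)$, $\yy\sim\mathcal{N}(\B\vv+\bb,\SSY)$, which gives
$$\vv\mid\TT,\yy\sim\mathcal{N}\!\left(\mm+\M\zz,\ (\SSVI+\B^T\SSYI\B)^{-1}\right),$$
where $\zz=\yy-\B\mm-\bb$ and $\M=\SSV\B^T(\B\SSV\B^T+\SSY)^{-1}$; here I invoke the identity $(\I-\M\B)\SSV=(\SSVI+\B^T\SSYI\B)^{-1}$ derived in Section~\ref{sec:rec}. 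It then suffices to verify the two claimed closed forms.

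For the covariance, substituting $\SSV=\ssV\I$, $\SSY=\ssY\I$ and $\B^T\B=\Q\LL\Q^T$, and writing the first term as $\tfrac{1}{\ssV}\I=\tfrac{1}{\ssV}\Q\Q^T$, yields
$$\SSVI+\B^T\SSYI\B=\Q\!\left(\tfrac{1}{\ssV}\I+\tfrac{1}{\ssY}\LL\right)\!\Q^T,$$
whose inner factor is diagonal. Inverting and again using orthogonality of $\Q$ gives exactly $\SSS_{\vv|\TT,\yy}=\Q(\tfrac{1}{\ssV}\I+\tfrac{1}{\ssY}\LL)^{-1}\Q^T$. For the mean, I would start from the matrix-inversion-lemma form of $\M$ in Equation (\ref{eq:M}), $\M=\SSV\B^T\SSYI-\SSV\B^T\SSYI\B(\SSVI+\B^T\SSYI\B)^{-1}\B^T\SSYI$. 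Plugging in the scalars collapses this to $\M=\tfrac{\ssV}{\ssY}\B^T-\tfrac{\ssV}{\ssY^2}(\B^T\B)(\SSVI+\B^T\SSYI\B)^{-1}\B^T$, and substituting $\B^T\B=\Q\LL\Q^T$ together with the covariance form just computed telescopes the middle product (via $\Q^T\Q=\I$) into $\Q\LL(\tfrac{1}{\ssV}\I+\tfrac{1}{\ssY}\LL)^{-1}\Q^T$. Factoring out $\tfrac{\ssV}{\ssY}$ and setting $\mm_{\vv|\TT,\yy}=\mm+\M\zz$ reproduces the stated expression.

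The derivation is essentially routine Gaussian algebra, so the only real obstacle is bookkeeping: propagating the two scalars $\ssV,\ssY$ correctly through the Woodbury expansion and inserting $\Q\Q^T=\I$ at the right places so that every matrix product telescopes to a single conjugation $\Q(\cdot)\Q^T$ by the orthogonal eigenvector matrix. Once those substitutions are organized, both the mean and covariance fall out directly.
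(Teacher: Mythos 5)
Your proposal is correct and follows essentially the same route as the paper's proof: both start from the recovery distribution with mean $\mm+\M\zz$ and covariance $(\I-\M\B)\SSV=(\SSVI+\B^T\SSYI\B)^{-1}$, expand $\M$ via the matrix inversion lemma, substitute $\SSV=\ssV\I$, $\SSY=\ssY\I$ and $\B^T\B=\Q\LL\Q^T$, and telescope using the orthogonality of $\Q$. The scalar bookkeeping you outline matches the paper's algebra step for step, so there is nothing to add.
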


In Theorem \ref{theorem2}, from $\zz$, we can apply matrix multiplications from right to left to get $\mm_{\vv|\TT,\yy}$. The whole computation takes $\OO(D^2+NLd)$. To generate normal samples a Cholseky factorization for $\SSS_{\vv|\TT,\yy}$ is required. But $\left(\frac{1}{\ssV}\I+\frac{1}{\ssY}\LL\right)^{-1}$ is diagonal, so it can be obtained in $\OO(D^2)$ time as well. Now we prove Theorem \ref{theorem2}.
\begin{proof}

$\mm_{\vv|\TT,\yy}$ and $\SSS_{\vv|\TT,\yy}$ can both be derived algebraically. 
\begin{align}
	\mm_{\vv|\TT,\yy}&=\mm+\M\zz\notag\\
	&=\mm+\SSV\B^T(\B\SSV\B^T+\SSY)^{-1}\zz\notag\\
	&=\mm+\SSV\B^T(\SSYI-\SSYI\B(\SSVI+\B^T\SSYI\B)^{-1}\B^T\SSYI)\zz\notag\\
	&=\mm+\frac{\ssV}{\ssY}(\B^T-\frac{1}{\ssY}\B^T\B(\SSVI+\B^T\SSYI\B)^{-1}\B^T)\zz\notag\\
	&=\mm+\frac{\ssV}{\ssY}(\B^T-\frac{1}{\ssY}\Q\LL\Q^T\Q\left(\frac{1}{\ssV}\I+\frac{1}{\ssY}\LL\right)^{-1}\Q^T\B^T)\zz\notag\\
	&=\mm+\frac{\ssV}{\ssY}(\B^T-\frac{1}{\ssY}\Q\LL\left(\frac{1}{\ssV}\I+\frac{1}{\ssY}\LL\right)^{-1}\Q^T\B^T)\zz.\notag
\end{align}
\begin{align}
	\SSS_{\vv|\TT,\yy}&=(\I-\M\B)\SSV\notag\\
	&=(\I-\SSV\B^T(\B\SSV\B^T+\SSY)^{-1}\B)\SSV\notag\\
	&=(\SSVI+\B^T\SSYI\B)^{-1}\notag\\
	&=\Q\left(\frac{1}{\ssV}\I+\frac{1}{\ssY}\LL\right)^{-1}\Q^T.\notag
\end{align}
\end{proof}
\newpage
\section{\camread{Additional experimental results}}

\begin{figure}[htbp]
\centering
\includegraphics[width=0.95\linewidth]{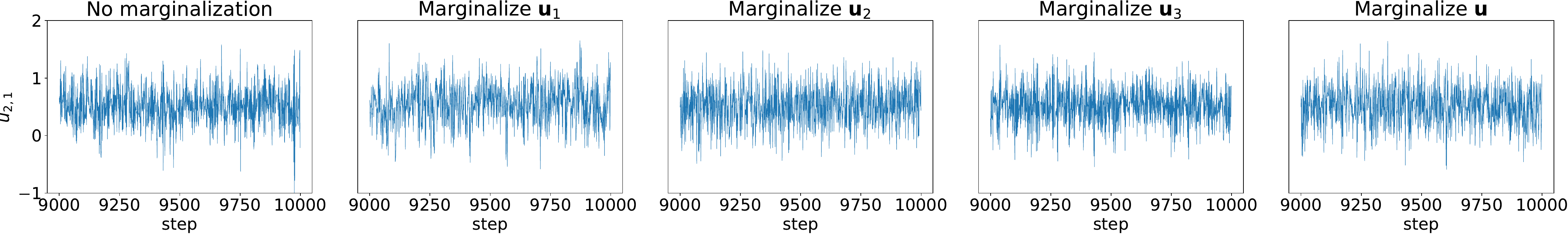}
\hspace*{-4pt}\includegraphics[width=0.959\linewidth]{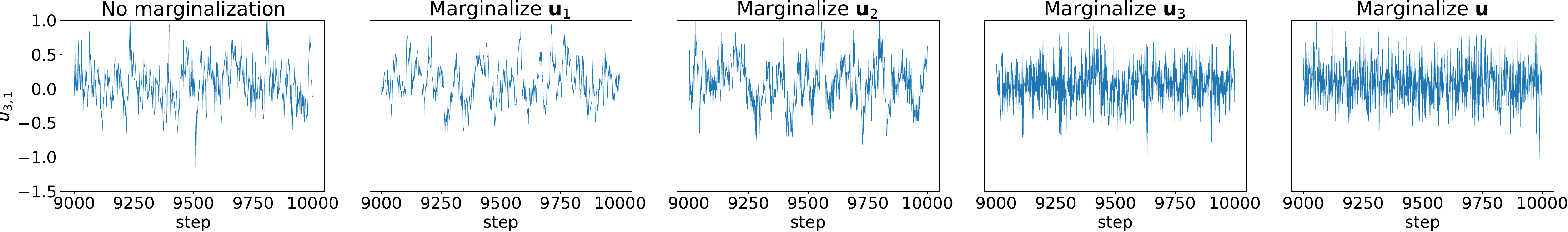}
\includegraphics[width=0.95\linewidth]{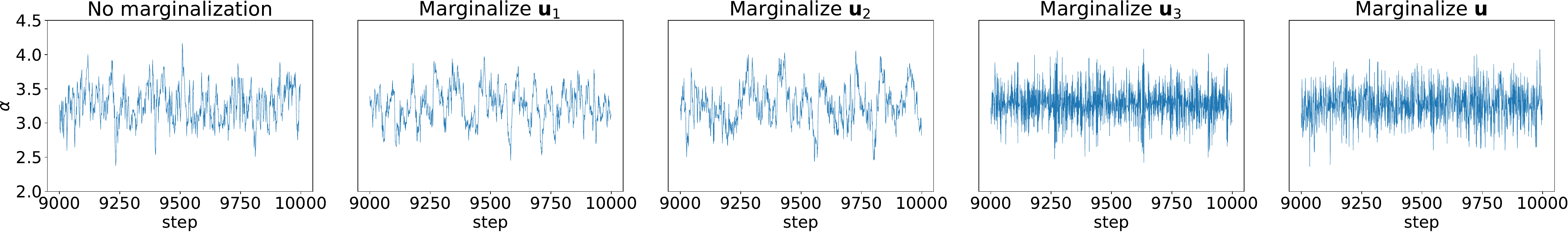}

\caption{Trace plots for $u_{2,1}$, $u_{3,1}$ and $\alpha$ of an interval of 1,000 sampling steps after warmup on the ETH instructor evaluation model, using the same data as Figure \ref{fig:inst_ess} in the paper.}
\label{fig:inst_trace}
\end{figure}
\begin{table}[htbp]
\centering
\caption{Number of parameters (out of 4117) whose $\hat{R}$ exceed a threshold for 1,000 samples from HMC, with or without marginalization. Mean and standard deviation over 5 independent runs are reported. }
\vspace{10pt}
\small
\begin{tabular}{cccccc}
	\toprule
	Threshold&No marginalization&Marginalize $\uu_1$&Marginalize $\uu_2$&Marginalize $\uu_3$&Marginalize $\uu$\\
	\midrule
	$>1.01$&186.80 (37.26)&295.60 (134.57)&11.80 (6.05)&59.80 (37.35)&5.20 (1.72)\\
	$>1.02$&99.40 (18.91)&153.20 (99.90)&6.20 (7.19)&9.80 (10.48)&0.00 (0.00)\\
	$>1.05$&13.40 (6.83)&54.00 (51.99)&0.00 (0.00)&0.00 (0.00)&0.00 (0.00)\\
	$>1.10$&0.00 (0.00)&23.20 (24.51)&0.00 (0.00)&0.00 (0.00)&0.00 (0.00)\\
	\bottomrule
\end{tabular}
\label{tab:inst_rhat}
\end{table}

\begin{table}[!htbp]
\centering
\caption{Divergence (mean and standard deviation) out of 10,000 samples with different strategies on the grouseticks model across 5 random seeds under different target probabilities. We use M1 to represent marginalizing $\uu_1$, M2 to represent marginalizing $\uu_2$, R1 to represent reparameterizing $\uu_1$, R2 to represent reparameterizing $\uu_2$.}
\vspace{10pt}
\small
\begin{tabular}{ccccccc}
	\toprule
	Transformation&Number of divergence \\
	\midrule
	No marginalization&42.60 (25.76)\\
	M1&14.60 (13.85)\\
	M2&0.00 (0.00)\\
	M1, R2&22.60 (18.13)\\
	M2, R1&0.00 (0.00)\\
	R1, R2&431.60 (507.37)\\
	\bottomrule
\end{tabular}
\label{tab:grouse_divergence}
\end{table}

\newpage

\section{Models and example probabilistic programs}
\label{sec:models}
We provide the details of the nine cognitive science datasets and their corresponding models and probabilistic programs. We follow \cite{nicenboim2021introduction} and use maximal models with correlated varying intercept and slopes for each of the datasets. The model for the pupil dataset is described in Section \ref{sec:background}. 
\subsection{Agreement attraction in comprehension}
The dataset (dillonE1) studies the effect of the agreement attraction phenomenon when reading a noun with the auxiliary verb \cite{dillon2013contrasting}. The predictor is
\begin{align}
\log(y_i)=\alpha+u_{1,g_{1,i},1} + u_{2,g_{2,i},1} + t_i(\beta + u_{1,g_{1,i},2} + u_{2,g_{2,i},2})+\epsilon, \epsilon\sim\mathcal{N}(0,\sigma^2).\notag
\end{align}
Each experiment result $y_i$ is from subject $g_{1,i}$ on sentence $g_{2,i}$, with $t_i$ being the interference level ($t_i\in\{0,1\}$). Bayesian hierarchical modeling assigns prior to the variables.
\begin{gather}
\TTT_1\sim\mathcal{N}^{+}(\zero,\text{diag}(5^2,5^2)),\ \LLL_1\sim \LKJ(2,1),\ \TTT_2\sim\mathcal{N}^{+}(\zero,\text{diag}(5^2,5^2)),\ \LLL_2\sim \LKJ(2,1),\notag\\
\alpha\sim\mathcal{N}(0, 10^2),\ \beta\sim\mathcal{N}(0, 5^2),\ \sigma\sim\mathcal{N}^{+}(0, 5^2),\ \uu_{1,j}\sim\mathcal{N}(\zero, \TTT_1\LLL_1\LLL_1^T\TTT_1),\ \uu_{2,k}\sim\mathcal{N}(\zero, \TTT_2\LLL_2\LLL_2^T\TTT_2).\notag
\end{gather}
The probabilistic program in NumPyro is then
\begin{minted}[
frame=single,
fontsize=\tiny,
]
{python}
def model(n_sub, n_item, n_obs, g1, g2, treatment, obs):
alpha = numpyro.sample('alpha', dist.Normal(0, 10))
beta = numpyro.sample('beta', dist.Normal(0, 5))
sigma = numpyro.sample('sigma', dist.HalfNormal(5))
sigma_u = numpyro.sample('sigma_u', dist.LKJCholesky(2))
tau_u = numpyro.sample('tau_u', dist.HalfNormal(5), sample_shape=(2, ))
sigma_v = numpyro.sample('sigma_v', dist.LKJCholesky(2))
tau_v = numpyro.sample('tau_v', dist.HalfNormal(5), sample_shape=(2, ))
s_u = jnp.matmul(jnp.diag(tau_u), sigma_u)
s_v = jnp.matmul(jnp.diag(tau_v), sigma_v)
u = numpyro.sample('u', dist.MultivariateNormal(jnp.zeros((2,)), scale_tril=s_u), sample_shape=(n_sub,))
v = numpyro.sample('v', dist.MultivariateNormal(jnp.zeros((2,)),scale_tril=s_v), sample_shape=(n_item,))
numpyro.sample('y', dist.LogNormal(alpha + u[g1][...,0] + v[g2][...,0] + 
treatment * (beta + u[g1][...,1] + v[g2][...,1]), sigma), obs=obs)	
\end{minted}
We use \texttt{u} and \texttt{v} in the codes to represent the two random effects. The probabilistic program with marginalization is similar. Suppose we marginalize \texttt{u}, our probabilistic program becomes
\begin{minted}[
frame=single,
fontsize=\tiny,
]
{python}
def model(n_sub, n_item, n_obs, g1, g2, treatment, obs):
alpha = numpyro.sample('alpha', dist.Normal(0, 10))
beta = numpyro.sample('beta', dist.Normal(0, 5))
sigma = numpyro.sample('sigma', dist.HalfNormal(5))
sigma_u = numpyro.sample('sigma_u', dist.LKJCholesky(2))
tau_u = numpyro.sample('tau_u', dist.HalfNormal(5), sample_shape=(2, ))
sigma_v = numpyro.sample('sigma_v', dist.LKJCholesky(2))
tau_v = numpyro.sample('tau_v', dist.HalfNormal(5), sample_shape=(2, ))
s_u = jnp.matmul(jnp.diag(tau_u), sigma_u)
s_v = jnp.matmul(jnp.diag(tau_v), sigma_v)
u = jnp.zeros((n_sub, 2))
v = numpyro.sample('v', dist.MultivariateNormal(jnp.zeros((2,)),scale_tril=s_v), sample_shape=(n_item,))
numpyro.sample('y', MarginalizedMultivariateLogNormalGroupCoeff(alpha + u[g1][...,0] + v[g2][...,0] + 
treatment * (beta + u[g1][...,1] + v[g2][...,1]), s_u, sigma, g1, treatment, n_sub, n_obs, u), obs=obs)

\end{minted}
To marginalize \texttt{v}, the probabilistic program is
\begin{minted}[
frame=single,
fontsize=\tiny,
]
{python}
def model(n_sub, n_item, n_obs, g1, g2, treatment, obs):
alpha = numpyro.sample('alpha', dist.Normal(0, 10))
beta = numpyro.sample('beta', dist.Normal(0, 5))
sigma = numpyro.sample('sigma', dist.HalfNormal(5))
sigma_u = numpyro.sample('sigma_u', dist.LKJCholesky(2))
tau_u = numpyro.sample('tau_u', dist.HalfNormal(5), sample_shape=(2, ))
sigma_v = numpyro.sample('sigma_v', dist.LKJCholesky(2))
tau_v = numpyro.sample('tau_v', dist.HalfNormal(5), sample_shape=(2, ))
s_u = jnp.matmul(jnp.diag(tau_u), sigma_u)
s_v = jnp.matmul(jnp.diag(tau_v), sigma_v)
v = jnp.zeros((n_item, 2))
u = numpyro.sample('u', dist.MultivariateNormal(jnp.zeros((2,)), scale_tril=s_u), sample_shape=(n_sub,))
numpyro.sample('y', MarginalizedMultivariateLogNormalGroupCoeff(alpha + u[g1][...,0] + v[g2][...,0] + 
treatment * (beta + u[g1][...,1] + v[g2][...,1]), s_v, sigma, g2, treatment, n_item, n_obs, v), obs=obs)
\end{minted} 
The probabilistic programs for the other models will be similar and we omit them for simplicity.
\subsection{English and Dutch Grammaticality illusion}
The datasets (english \cite{vasishth2010short}, dutch \cite{frank2016cross}) study the VP-forgetting hypothesis \cite{gibson1999memory} for different languages. They use the same predictor and priors. The predictor is
\begin{align}
y_i=\alpha+u_{1,g_{1,i},1} + u_{2,g_{2,i},1} + t_i(\beta + u_{1,g_{1,i},2} + u_{2,g_{2,i},2})+\epsilon, \epsilon\sim\mathcal{N}(0,\sigma^2),\notag
\end{align}
where $t_i$ is the treatment variable and $t_i\in\{-1,1\}$.
And the prior is
\begin{gather}
\TTT_1\sim\mathcal{N}^{+}(\zero,\text{diag}(1^2,1^2)),\ \LLL_1\sim \LKJ(2,1),\ \TTT_2\sim\mathcal{N}^{+}(\zero,\text{diag}(1^2,1^2)),\ \LLL_2\sim \LKJ(2,1),\notag\\
\alpha\sim\mathcal{N}(0, 10^2),\ \beta\sim\mathcal{N}(0, 5^2),\ \sigma\sim\mathcal{N}^{+}(0, 5^2),\ \uu_{1,j}\sim\mathcal{N}(\zero, \TTT_1\LLL_1\LLL_1^T\TTT_1),\ \uu_{2,k}\sim\mathcal{N}(\zero, \TTT_2\LLL_2\LLL_2^T\TTT_2).\notag
\end{gather}
\subsection{Electrophysiological responses with N400 effect}
In the study of language, the electroencephalography (EGG) responses with N400 effect is studied \cite{nicenboim2021introduction}. Experimental results of subjects from the Edinburgh lab are collected \cite{nieuwland2018large}. 
The predictor is
\begin{align}
y_i=\alpha+u_{1,g_{1,i},1} + u_{2,g_{2,i},1} + t_i(\beta + u_{1,g_{1,i},2} + u_{2,g_{2,i},2})+\epsilon, \epsilon\sim\mathcal{N}(0,\sigma^2),\notag
\end{align}
where $t_i$ is the treatment variable and $t_i\in[0,1]$.
And the prior is
\begin{gather}
\TTT_1\sim\mathcal{N}^{+}(\zero,\text{diag}(20^2,20^2)),\ \LLL_1\sim \LKJ(2,1),\ \TTT_2\sim\mathcal{N}^{+}(\zero,\text{diag}(20^2,20^2)),\ \LLL_2\sim \LKJ(2,1),\notag\\
\alpha\sim\mathcal{N}(0, 10^2),\ \beta\sim\mathcal{N}(0, 10^2),\ \sigma\sim\mathcal{N}^{+}(0, 50^2),\ \uu_{1,j}\sim\mathcal{N}(\zero, \TTT_1\LLL_1\LLL_1^T\TTT_1),\ \uu_{2,k}\sim\mathcal{N}(\zero, \TTT_2\LLL_2\LLL_2^T\TTT_2).\notag
\end{gather}

\subsection{Subjective and objective relatives}
\citet{grodner2005consequences} (gg05) studies the processing time difference between object relative clause and subject relative clause sentences. The predictor is
\begin{align}
\log(y_i)=\alpha+u_{1,g_{1,i},1} + u_{2,g_{2,i},1} + u_{3,g_{3,i},1} + t_i(\beta + u_{1,g_{1,i},2} + u_{2,g_{2,i},2} + u_{3,g_{3,i},1})+\epsilon, \epsilon\sim\mathcal{N}(0,\sigma^2),\notag
\end{align}
and the treatment variable $t_i\in\{-1,1\}$. The third effect $\uu_3$ is related to different repeats of the experiment and has only two groups. We consider the first two effects for marginalization to match the other experiments. The prior for the variables is
\begin{gather}
\TTT_1\sim\mathcal{N}^{+}(\zero,\text{diag}(5^2,5^2)),\ \TTT_2\sim\mathcal{N}^{+}(\zero,\text{diag}(5^2,5^2)),\ \TTT_3\sim\mathcal{N}^{+}(\zero,\text{diag}(5^2,5^2)),\notag\\
\LLL_1\sim \LKJ(2,1),\ \LLL_2\sim \LKJ(2,1),\ \LLL_3\sim \LKJ(2,1),\notag\\
\alpha\sim\mathcal{N}(0, 10^2),\ \beta\sim\mathcal{N}(0, 5^2),\ \sigma\sim\mathcal{N}^{+}(0, 5^2),\notag\\
\uu_{1,j}\sim\mathcal{N}(\zero, \TTT_1\LLL_1\LLL_1^T\TTT_1),\ \uu_{2,k}\sim\mathcal{N}(\zero, \TTT_2\LLL_2\LLL_2^T\TTT_2),\ \uu_{3,l}\sim\mathcal{N}(\zero, \TTT_3\LLL_3\LLL_3^T\TTT_3).\notag
\end{gather}
\subsection{Relative clause processing in Mandarin Chinese}
The datasets (mandarin \cite{wu2008processing}, mandarin2 \cite{vasishth2013processing}) are collected from experiments to study the effect of relative clause type on reading time of Mandarin Chinese. In our model, the predictor is
\begin{align}
\log(y_i)=\alpha+u_{1,g_{1,i},1} + u_{2,g_{2,i},1} + t_i(\beta + u_{1,g_{1,i},2} + u_{2,g_{2,i},2})+\epsilon, \epsilon\sim\mathcal{N}(0,\sigma^2),\notag
\end{align}
where $t_i$ is the treatment variable and $t_i\in\{-0.5,0.5\}$.
And the prior is
\begin{gather}
\TTT_1\sim\mathcal{N}^{+}(\zero,\text{diag}(5^2,5^2)),\ \LLL_1\sim \LKJ(2,1),\ \TTT_2\sim\mathcal{N}^{+}(\zero,\text{diag}(5^2,5^2)),\ \LLL_2\sim \LKJ(2,1),\notag\\
\alpha\sim\mathcal{N}(0, 10^2),\ \beta\sim\mathcal{N}(0, 5^2),\ \sigma\sim\mathcal{N}^{+}(0, 5^2),\ \uu_{1,j}\sim\mathcal{N}(\zero, \TTT_1\LLL_1\LLL_1^T\TTT_1),\ \uu_{2,k}\sim\mathcal{N}(\zero, \TTT_2\LLL_2\LLL_2^T\TTT_2).\notag
\end{gather}

\subsection{The Stroop effect}
The Stroop effect describes the change of response time between congruent and incongruent stimuli \cite{macleod1991half}. The dataset is from \citet{ebersole2016many}. Different from the other models, the noise scale for each observation is also grouped. In our model, the predictor is
\begin{gather}
\log(y_i)=\alpha+u_{g_{i},1} + t_i(\beta + u_{g_{i},2})+\epsilon,\ \epsilon\sim\mathcal{N}(0,\sigma_i^2),\ \sigma_i=\exp(\sigma_\alpha+s_{g_i,1} + t_i(\sigma_\beta + s_{g_i,2})),\notag
\end{gather}
and the treatment variable is $t_i\in\{-1,1\}$. Priors for the model are
\begin{gather}
\TTT_{\uu}\sim\mathcal{N}^{+}(\zero,\text{diag}(1,1)),\ \LLL_{\uu}\sim \LKJ(2,1),\ \TTT_{\sigma}\sim\mathcal{N}^{+}(\zero,\text{diag}(1,1)),\ \LLL_{\sigma}\sim \LKJ(2,1),\notag\\
\alpha\sim\mathcal{N}(6, 1.5^2),\ \beta\sim\mathcal{N}(0, 0.01^2),\ \sigma_\alpha\sim\mathcal{N}(0, 1),\ \sigma_\beta\sim\mathcal{N}(0,1),\notag\\
\uu_{j}\sim\mathcal{N}(\zero, \TTT_{\uu}\LLL_{\uu}\LLL_{\uu}^T\TTT_{\uu}),\ \mathbf{s}_{j}\sim\mathcal{N}(\zero, \TTT_{\sigma}\LLL_{\sigma}\LLL_{\sigma}^T\TTT_{\sigma}).\ \notag
\end{gather}

\end{document}